\icmltitlerunning{GIST: General Iterative Shrinkage and Thresholding algorithm}
\def\sign{\mathrm{sign}}
\newcommand{\argmin}{\mathop{\arg\min}}
\newcommand{\EqRef}[1]{Eq.~(\ref{#1})}
\newcommand{\ThemRef}[1]{Theorem~\ref{#1}}
\newcommand{\LemmaRef}[1]{Lemma~\ref{#1}}
\newcommand{\FigRef}[1]{Figure~\ref{#1}}
\newcommand{\AlgRef}[1]{Algorithm~\ref{#1}}
\newcommand{\TabRef}[1]{Table~\ref{#1}}
\newtheorem{theorem}{Theorem}
\newtheorem{lemma}{Lemma}
\newtheorem{remark}{Remark}
\newenvironment{proof}[1][Proof]{\begin{trivlist}
\item[\hskip \labelsep {\bfseries #1}]}{\end{trivlist}}
\def\qed {{% set up
\parfillskip=0pt % so \par doesn't push \square to left
\widowpenalty=10000 % so we don't break the page before \square
\displaywidowpenalty=10000 % ditto
\finalhyphendemerits=0 % Textbook exercise 14.32
%
% horizontal
\leavevmode % \nobreak means lines not pages
\unskip % remove previous space or glue
\nobreak % don't break lines
\hfil % ragged right if we spill over
\penalty50 % discouragement to do so
\hskip.2em % ensure some space
\null % anchor following \hfill
\hfill % push \square to right
$\square$% % the end-of-proof mark
%
% vertical
\par}} % build paragraph
\begin{document}

\twocolumn[
\icmltitle{A General Iterative Shrinkage and Thresholding Algorithm for Non-convex Regularized Optimization Problems}

% It is OKAY to include author information, even for blind
% submissions: the style file will automatically remove it for you
% unless you've provided the [accepted] option to the icml2013
% package.

\icmlauthor{Pinghua Gong}{gph08@mails.tsinghua.edu.cn}
\icmlauthor{Changshui Zhang}{zcs@mail.tsinghua.edu.cn}
\icmladdress{State Key Laboratory on Intelligent Technology and Systems\\
       Tsinghua National Laboratory for Information Science and Technology (TNList)\\
       Department of Automation, Tsinghua University, Beijing 100084, China}
\icmlauthor{Zhaosong Lu}{zhaosong@sfu.ca}
\icmladdress{Department of Mathematics, Simon Fraser University, Burnaby, BC, V5A 1S6, Canada}
\icmlauthor{Jianhua Z. Huang}{jianhua@stat.tamu.edu}
\icmladdress{Department of Statistics, Texas A$\&$M University, TX 77843, USA}
\icmlauthor{Jieping Ye}{jieping.ye@asu.edu}
\icmladdress{Computer Science and Engineering, Arizona State University, Tempe, AZ 85287, USA}

%\icmlauthor{Your Name}{email@yourdomain.edu}
%\icmladdress{Your Fantastic Institute,
%            314159 Pi St., Palo Alto, CA 94306 USA}
%\icmlauthor{Your CoAuthor's Name}{email@coauthordomain.edu}
%\icmladdress{Their Fantastic Institute,
%            27182 Exp St., Toronto, ON M6H 2T1 CANADA}

% You may provide any keywords that you
% find helpful for describing your paper; these are used to populate
% the "keywords" metadata in the PDF but will not be shown in the document
\icmlkeywords{Sparse learning, Non-convex optimization, Iterative shrinkage and thresholding, DC programming}

\vskip 0.3in
]

\begin{abstract}
Non-convex sparsity-inducing penalties have recently received considerable attentions in sparse learning.
Recent theoretical investigations have demonstrated their superiority over
the convex counterparts in several sparse learning settings. However, solving the non-convex optimization problems
associated with non-convex penalties remains a big challenge. A commonly used approach is the Multi-Stage (MS) convex
relaxation (or DC programming), which relaxes the original non-convex problem to a sequence of convex problems.
This approach is usually not very practical for large-scale problems because its computational cost is a multiple
of solving a single convex problem. In this paper, we propose a General Iterative Shrinkage and Thresholding
(GIST) algorithm to solve the nonconvex
optimization problem for a large class of non-convex penalties. The GIST algorithm iteratively solves a proximal operator problem,
which in turn has a closed-form solution for many commonly used penalties. At each outer iteration of the algorithm,
we use a line search initialized by the Barzilai-Borwein (BB) rule that allows finding an appropriate step size quickly.
The paper also presents a detailed convergence analysis of the GIST algorithm. The efficiency of the proposed algorithm
is demonstrated by extensive experiments on large-scale data sets.
\end{abstract}

\section{Introduction}\label{sec:introduction}
Learning sparse representations has important applications in many areas of science and engineering. The use of
an $\ell_0$-norm regularizer leads to a sparse solution, however the $\ell_0$-norm regularized optimization problem is challenging to solve,
due to the discontinuity and non-convexity of the $\ell_0$-norm regularizer. The $\ell_1$-norm regularizer,
a continuous and convex surrogate, has been studied extensively in the literature \cite{tibshirani1996regression,efron2004least}
and has been applied successfully to many applications including signal/image processing,
biomedical informatics and computer vision \cite{shevade2003simple,wright2008robust,beck2009fast,wright2009sparse,ye2012sparse}.
Although the $\ell_1$-norm based sparse learning formulations have achieved great success, they have been shown to be
suboptimal in many cases \cite{candes2008enhancing,zhang2010analysis,zhang2012multi}, since the $\ell_1$-norm is
a loose approximation of the $\ell_0$-norm and
often leads to an over-penalized problem. To address this issue, many non-convex regularizers, interpolated between the $\ell_0$-norm
and the $\ell_1$-norm, have been proposed to better approximate the $\ell_0$-norm. They include $\ell_q$-norm ($0<q<1$) \cite{foucart2009sparsest},
Smoothly Clipped Absolute Deviation (SCAD) \cite{fan2001variable},
Log-Sum Penalty (LSP) \cite{candes2008enhancing}, Minimax Concave Penalty (MCP) \cite{zhang2010nearly},
Geman Penalty (GP) \cite{geman1995nonlinear,trzasko2009relaxed} and Capped-$\ell_1$ penalty \cite{zhang2010analysis,zhang2012multi,gong2012multi}.

Although the non-convex regularizers (penalties) are appealing in sparse learning, it is challenging to solve the corresponding non-convex
optimization problems. In this paper, we propose a General Iterative Shrinkage and Thresholding (GIST) algorithm
for a large class of non-convex penalties. The key step of the proposed algorithm is to compute a proximal operator, which has a closed-form solution
for many commonly used non-convex penalties. In our algorithm, we adopt the Barzilai-Borwein (BB) rule \cite{barzilai1988two}
to initialize the line search step size at each iteration, which greatly accelerates the convergence speed.
We also use a non-monotone line search criterion to further speed up the convergence of the algorithm.
In addition, we present a detailed convergence analysis for the proposed algorithm. Extensive experiments on large-scale real-world data sets
demonstrate the efficiency of the proposed algorithm.

%The rest of the paper is organized as follows. In \SecRef{sec:gist}, we present the GIST algorithm including
%the optimization procedure and convergence analysis. In \SecRef{sec:relatedwork}, we discuss some related algorithms.
%Experimental results on large-scale data sets are presented in \SecRef{sec:experiments} and we conclude the paper in
%\SecRef{sec:conclusions}.

\section{The Proposed Algorithm: GIST}\label{sec:gist}
\subsection{General Problems}
We consider solving the following general problem:
\begin{align}\label{eq:nonconvexopt}
\min_{\mathbf{w}\in\mathbb{R}^d}\left\{f(\mathbf{w})=l(\mathbf{w})+r(\mathbf{w})\right\}.
\end{align}
We make the following assumptions on the above formulation throughout the paper:
\begin{itemize}
\item[\textbf{A1}] $l(\mathbf{w})$ is continuously differentiable with Lipschitz continuous gradient, that is, there exists
a positive constant $\beta(l)$ such that
\begin{align}
\|\nabla l(\mathbf{w})-\nabla l(\mathbf{u})\|\leq \beta(l)\|\mathbf{w}-\mathbf{u}\|,\forall \mathbf{w},\mathbf{u}\in\mathbb{R}^d.\nonumber
\end{align}
\item[\textbf{A2}]$r(\mathbf{w})$ is a continuous function which is possibly \emph{non-smooth} and \emph{non-convex}, and can be rewritten as
the difference of two convex functions, that is,
\begin{align}
r(\mathbf{w})=r_1(\mathbf{w})-r_2(\mathbf{w}),\nonumber
\end{align}
where $r_1(\mathbf{w})$ and $r_2(\mathbf{w})$ are convex functions.
\item[\textbf{A3}] $f(\mathbf{w})$ is bounded from below.
\end{itemize}

\begin{remark}
We say that $\mathbf{w}^\star$ is a critical point of problem (\ref{eq:nonconvexopt}), if the following
holds \cite{toland1979duality,wright2009sparse}:
\begin{align}
\mathbf{0}\in\nabla l(\mathbf{w}^\star)+\partial r_1(\mathbf{w}^\star)-\partial r_2(\mathbf{w}^\star),\nonumber
\end{align}
where $\partial r_1(\mathbf{w}^\star)$ is the sub-differential of the function $r_1(\mathbf{w})$ at $\mathbf{w}=\mathbf{w}^\star$,
that is,
\begin{align}
\partial r_1(\mathbf{w}^\star)=\left\{\mathbf{s}:r_1(\mathbf{w})\geq r_1(\mathbf{w}^\star)+\langle\mathbf{s},\mathbf{w}-\mathbf{w}^\star\rangle,\forall\mathbf{w}\in\mathbb{R}^d\right\}.\nonumber
\end{align}
We should mention that the sub-differential is non-empty on any \emph{convex} function; this is why we make the assumption
that $r(\mathbf{w})$ can be rewritten as the difference of two \emph{convex} functions.
\end{remark}

\subsection{Some Examples}
Many formulations in machine learning satisfy the assumptions above. The following least square and
logistic loss functions are two commonly used ones which satisfy assumption \textbf{A1}:
\begin{align}
l(\mathbf{w})=\frac{1}{2n}\|X\mathbf{w}-\mathbf{y}\|^2~\mathrm{or}~\frac{1}{n}\sum_{i=1}^n\log\left(1+\exp(-y_i\mathbf{x}_i^T\mathbf{w})\right),\nonumber
\end{align}
where $X=[\mathbf{x}_1^T;\cdots;\mathbf{x}_n^T]\in\mathbb{R}^{n\times d}$ is a data matrix and $\mathbf{y}=[y_1,\cdots,y_n]^T\in\mathbb{R}^n$
is a target vector.
The regularizers (penalties) which satisfy the assumption \textbf{A2} are presented in \TabRef{tab:regularizerexamples}.
They are non-convex (except the $\ell_1$-norm) and extensively used in sparse learning.
The functions $l(\mathbf{w})$ and $r(\mathbf{w})$ mentioned above are nonnegative. Hence, $f$ is bounded from below and satisfies assumption \textbf{A3}.

\begin{table*}[tb]\vspace{-0.0cm}
\caption{Examples of regularizers (penalties) $r(\mathbf{w})$ satisfying the assumption \textbf{A2} and the corresponding convex functions $r_1(\mathbf{w})$ and $r_2(\mathbf{w})$.
$\lambda>0$ is the regularization parameter; $r(\mathbf{w})=\sum_ir_i(w_i),r_1(\mathbf{w})=\sum_ir_{1,i}(w_i),
r_2(\mathbf{w})=\sum_ir_{2,i}(w_i),[x]_+=\max(0,x)$.} \vspace{-0.1cm} \label{tab:regularizerexamples}
\begin{center}
\footnotesize{
\begin{tabular}{l|l|l|l}
\hline\hline
Name &$r_i(w_i)$ &$r_{1,i}(w_i)$  &$r_{2,i}(w_i)$ \\
\hline
$\ell_1$-norm &$\lambda|w_i|$ &$\lambda|w_i|$ &$0$ \\
\hline
LSP &$\lambda\log(1+|w_i|/\theta)~(\theta>0)$ &$\lambda|w_i|$ &$\lambda(|w_i|-\log (1+|w_i|/\theta))$ \\
\hline
SCAD &$\lambda\int_{0}^{|w_i|}\min\left(1,\frac{[\theta\lambda-x]_+}{(\theta-1)\lambda}\right)dx~(\theta>2)$ &$\lambda|w_i|$ &$\lambda\int_{0}^{|w_i|}\frac{[\min(\theta\lambda,x)-\lambda]_+}{(\theta-1)\lambda}dx$ \\
&$=\left\{
   \begin{array}{ll}
     \lambda|w_i|, & \mathrm{if}~|w_i|\leq\lambda, \\
     \frac{-w_i^2+2\theta\lambda|w_i|-\lambda^2}{2(\theta-1)}, & \mathrm{if}~\lambda<|w_i|\leq\theta\lambda, \\
     (\theta+1)\lambda^2/2, & \mathrm{if}~|w_i|>\theta\lambda.
   \end{array}
 \right.$
 & &$=\left\{
   \begin{array}{ll}
     0, & \mathrm{if}~|w_i|\leq\lambda, \\
     \frac{w_i^2-2\lambda|w_i|+\lambda^2}{2(\theta-1)}, & \mathrm{if}~\lambda<|w_i|\leq\theta\lambda, \\
     \lambda|w_i|-\frac{(\theta+1)\lambda^2}{2}, & \mathrm{if}~|w_i|>\theta\lambda.
   \end{array}
 \right.$\\
\hline
MCP &$\lambda\int_{0}^{|w_i|}\left[1-\frac{x}{\theta\lambda}\right]_+dx~(\theta>0)$ &$\lambda|w_i|$ &$\lambda\int_{0}^{|w_i|}\min(1,x/(\theta\lambda))dx$ \\
&$=\left\{
   \begin{array}{ll}
     \lambda|w_i|-w_i^2/(2\theta), & \mathrm{if}~|w_i|\leq\theta\lambda, \\
     \theta\lambda^2/2, & \mathrm{if}~|w_i|>\theta\lambda.
   \end{array}
 \right.$
& &$=\left\{
   \begin{array}{ll}
     w_i^2/(2\theta), & \mathrm{if}~|w_i|\leq\theta\lambda, \\
     \lambda|w_i|-\theta\lambda^2/2, & \mathrm{if}~|w_i|>\theta\lambda.
   \end{array}
 \right.$\\
\hline
%GP &$\lambda\frac{|w_i|}{\theta+|w_i|}~(\theta>0)$ &$\lambda|w_i|$  &$\lambda\frac{(\theta-1)|w_i|+w_i^2}{\theta+|w_i|}$ \\
%\hline
Capped $\ell_1$ &$\lambda\min(|w_i|,\theta)~(\theta>0)$ &$\lambda|w_i|$  &$\lambda[|w_i|-\theta]_+$ \\
\hline\hline
\end{tabular}}
\end{center}\vspace{-0.2cm}
\end{table*}

\subsection{Algorithm}
Our proposed General Iterative Shrinkage and Thresholding (GIST) algorithm solves problem (\ref{eq:nonconvexopt})
by generating a sequence $\{\mathbf{w}^{(k)}\}$ via:
\begin{align}
\mathbf{w}^{(k+1)}=&\argmin_{\mathbf{w}}~l(\mathbf{w}^{(k)})+\langle\nabla l(\mathbf{w}^{(k)}),\mathbf{w}-\mathbf{w}^{(k)}\rangle\nonumber\\
&+\frac{t^{(k)}}{2}\|\mathbf{w}-\mathbf{w}^{(k)}\|^2+ r(\mathbf{w}),\label{eq:minimizesurrogate}
\end{align}
In fact, problem (\ref{eq:minimizesurrogate}) is equivalent to the following proximal operator problem:
\begin{align}
\mathbf{w}^{(k+1)}=\argmin_{\mathbf{w}}~\frac{1}{2}\|\mathbf{w}-\mathbf{u}^{(k)}\|^2+\frac{1}{t^{(k)}}r(\mathbf{w}),\nonumber
\end{align}
where $\mathbf{u}^{(k)} = \mathbf{w}^{(k)} - \nabla l(\mathbf{w}^{(k)})/t^{(k)}$.
Thus, in GIST we first perform a gradient descent along the direction $-\nabla l(\mathbf{w}^{(k)})$
with step size $1/t^{(k)}$ and then solve a proximal operator problem.
For all the regularizers listed in \TabRef{tab:regularizerexamples}, problem (\ref{eq:minimizesurrogate}) has a closed-form solution
(details are provided in the Appendix), although it may be a non-convex problem.
For example, for the $\ell_1$ and Capped $\ell_1$ regularizers, we have closed-form solutions
as follows:
\begin{align}
&\ell_1: w^{(k+1)}_i=\sign(u^{(k)}_i)\max\left(0,|u^{(k)}_i|-\lambda/t^{(k)}\right),\nonumber\\
&\mathrm{Capped}~\ell_1: w^{(k+1)}_i=\left\{
                                       \begin{array}{ll}
                                         x_1, & \mathrm{if}~h_i(x_1)\leq h_i(x_2), \\
                                         x_2, & \mathrm{otherwise},
                                       \end{array}
                                     \right.\nonumber
\end{align}
where $x_1=\sign(u^{(k)}_i)\max(|u^{(k)}_i|,\theta),~x_2=\sign(u^{(k)}_i)\min(\theta, [|u^{(k)}_i|-\lambda/t^{(k)}]_+)$ and $h_i(x)=0.5(x-u^{(k)}_i)^2+\lambda/t^{(k)}\min(|x|,\theta)$.
The detailed procedure of the GIST algorithm is presented in \AlgRef{alg:gist}. There are two issues that remain to be addressed: how
to initialize $t^{(k)}$ (in Line 4) and how to select a line search criterion (in Line 8) at each outer iteration.

\begin{algorithm}[tb]
   \caption{GIST: General Iterative Shrinkage and Thresholding Algorithm}
   \label{alg:gist}
\footnotesize{
\begin{algorithmic}[1]
   \STATE Choose parameters $\eta>1$ and $t_{\min},t_{\max}$ with $0<t_{\min}<t_{\max}$;
   \STATE Initialize iteration counter $k\leftarrow 0$ and a bounded starting point $\mathbf{w}^{(0)}$;
   \REPEAT
   \STATE $t^{(k)}\in[t_{\min},t_{\max}]$;
   \REPEAT
   \STATE $\mathbf{w}^{(k+1)}\leftarrow\argmin_{\mathbf{w}}~l(\mathbf{w}^{(k)})+\langle\nabla l(\mathbf{w}^{(k)}),\mathbf{w}-\mathbf{w}^{(k)}\rangle
+\frac{t^{(k)}}{2}\|\mathbf{w}-\mathbf{w}^{(k)}\|^2+ r(\mathbf{w})$;
   \STATE $t^{(k)}\leftarrow\eta t^{(k)}$;
   \UNTIL{some line search criterion is satisfied}
   \STATE $k\leftarrow k+1$
   \UNTIL{some stopping criterion is satisfied}
\end{algorithmic}
}
\end{algorithm}\vspace{-0.2cm}

\subsubsection{The Step Size Initialization: $1/t^{(k)}$}
Intuitively, a good step size initialization strategy at each outer iteration can greatly reduce the line search cost (Lines 5-8) and
hence is critical for the fast convergence of the algorithm. In this paper,
we propose to initialize the step size by adopting the Barzilai-Borwein (BB) rule \cite{barzilai1988two},
which uses a diagonal matrix $t^{(k)}I$ to approximate
the Hessian matrix $\nabla^2l(\mathbf{w})$ at $\mathbf{w}=\mathbf{w}^{(k)}$. Denote
\begin{align}
\mathbf{x}^{(k)} = \mathbf{w}^{(k)}-\mathbf{w}^{(k-1)},~\mathbf{y}^{(k)} = \nabla l(\mathbf{w}^{(k)})-\nabla l(\mathbf{w}^{(k-1)}).\nonumber
\end{align}
Then $t^{(k)}$ is initialized at the outer iteration $k$ as
\begin{align}
t^{(k)}=\argmin_{t}\|t\mathbf{x}^{(k)}-\mathbf{y}^{(k)}\|^2=\frac{\langle\mathbf{x}^{(k)},\mathbf{y}^{(k)}\rangle}{\langle\mathbf{x}^{(k)},\mathbf{x}^{(k)}\rangle}.\nonumber
\end{align}

\subsubsection{Line Search Criterion}
One natural and commonly used line search criterion is to require that the objective function value is monotonically decreasing. More specifically,
we propose to accept the step size $1/t^{(k)}$ at the outer iteration $k$ if the following monotone line search criterion is satisfied:
\begin{align}
f(\mathbf{w}^{(k+1)})\leq f(\mathbf{w}^{(k)})-\frac{\sigma}{2}t^{(k)}\|\mathbf{w}^{(k+1)}-\mathbf{w}^{(k)}\|^2,\label{eq:acceptcriterionMonotone}
\end{align}
where $\sigma$ is a constant in the interval $(0,1)$.

A variant of the monotone criterion in \EqRef{eq:acceptcriterionMonotone} is a non-monotone line search criterion \cite{grippo1986nonmonotone,grippo2002nonmonotone,wright2009sparse}.
It possibly accepts the step size $1/t^{(k)}$ even if $\mathbf{w}^{(k+1)}$ yields a larger objective function value than $\mathbf{w}^{(k)}$. Specifically, we propose to accept the step size
$1/t^{(k)}$, if $\mathbf{w}^{(k+1)}$ makes the objective function value smaller than the maximum over previous $m~(m>1)$ iterations, that is,
\begin{align}
f(\mathbf{w}^{(k+1)})&\leq \max_{i=\max(0,k-m+1),\cdots,k}f(\mathbf{w}^{(i)})\nonumber\\
&-\frac{\sigma}{2}t^{(k)}\|\mathbf{w}^{(k+1)}-\mathbf{w}^{(k)}\|^2,\label{eq:acceptcriterionNonmonotone}
\end{align}
where $\sigma\in(0,1)$.

\subsubsection{Convergence Analysis}
Inspired by \citet{wright2009sparse,lu2012iterative}, we present detailed convergence analysis under both monotone and non-monotone line search criteria.
We first present a lemma which guarantees that the monotone line search criterion in \EqRef{eq:acceptcriterionMonotone}
is satisfied. This is a basic support for the convergence of \AlgRef{alg:gist}.

\begin{lemma}\label{lemma:criterionsatisfy}
Let the assumptions \textbf{A1}-\textbf{A3} hold and the constant $\sigma\in (0,1)$ be given.
Then for any integer $k\geq 0$, the monotone line search criterion in \EqRef{eq:acceptcriterionMonotone}
is satisfied whenever $t^{(k)}\geq\beta(l)/(1-\sigma)$.
\end{lemma}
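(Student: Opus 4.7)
The plan is to combine two standard ingredients: the descent lemma for the smooth part $l$ (provided by assumption \textbf{A1}) and the optimality of $\mathbf{w}^{(k+1)}$ as a minimizer of the proximal subproblem in \EqRef{eq:minimizesurrogate}. Together they yield a per-iteration decrease whose coefficient we can match against the required $\sigma t^{(k)}/2$.

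First, since $\nabla l$ is $\beta(l)$-Lipschitz, the descent lemma gives
\begin{align}
l(\mathbf{w}^{(k+1)}) \leq l(\mathbf{w}^{(k)}) + \langle \nabla l(\mathbf{w}^{(k)}), \mathbf{w}^{(k+1)}-\mathbf{w}^{(k)}\rangle + \frac{\beta(l)}{2}\|\mathbf{w}^{(k+1)}-\mathbf{w}^{(k)}\|^2. \nonumber
\end{align}
Second, because $\mathbf{w}^{(k+1)}$ is by definition a (global) minimizer of the subproblem in \EqRef{eq:minimizesurrogate}, evaluating that subproblem at the feasible point $\mathbf{w}^{(k)}$ yields
\begin{align}
\langle \nabla l(\mathbf{w}^{(k)}), \mathbf{w}^{(k+1)}-\mathbf{w}^{(k)}\rangle + \frac{t^{(k)}}{2}\|\mathbf{w}^{(k+1)}-\mathbf{w}^{(k)}\|^2 + r(\mathbf{w}^{(k+1)}) \leq r(\mathbf{w}^{(k)}). \nonumber
\end{align}
Here we only need that $\mathbf{w}^{(k+1)}$ is at least as good as $\mathbf{w}^{(k)}$ in the subproblem, so the non-convexity of $r$ is not an issue.

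Third, I would add the two inequalities, cancelling the inner product term, to obtain
\begin{align}
f(\mathbf{w}^{(k+1)}) \leq f(\mathbf{w}^{(k)}) - \frac{t^{(k)}-\beta(l)}{2}\|\mathbf{w}^{(k+1)}-\mathbf{w}^{(k)}\|^2. \nonumber
\end{align}
The desired criterion \EqRef{eq:acceptcriterionMonotone} then follows as soon as $t^{(k)}-\beta(l) \geq \sigma\, t^{(k)}$, i.e., $t^{(k)} \geq \beta(l)/(1-\sigma)$, which is precisely the hypothesis.

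I do not expect any serious obstacle: the only subtlety is to be careful that the optimality step uses merely a function-value comparison (not a first-order condition), which is appropriate since $r$ is non-convex; the minimum in \EqRef{eq:minimizesurrogate} is assumed to be attained by the algorithm specification, so this step is legitimate. Assumption \textbf{A3} plays no role in this particular lemma; it will only matter later for telescoping arguments about the whole sequence.
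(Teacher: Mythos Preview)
Your proof is correct and follows essentially the same route as the paper: both use the descent lemma from the Lipschitz gradient assumption together with the function-value comparison at $\mathbf{w}^{(k)}$ from the minimality of $\mathbf{w}^{(k+1)}$ in \EqRef{eq:minimizesurrogate}, then add and rearrange to get $f(\mathbf{w}^{(k+1)})\leq f(\mathbf{w}^{(k)})-\tfrac{t^{(k)}-\beta(l)}{2}\|\mathbf{w}^{(k+1)}-\mathbf{w}^{(k)}\|^2$. Your remarks that only a value comparison (not a first-order condition) is needed and that \textbf{A3} is not used here are accurate and worth keeping.
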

\begin{proof}
Since $\mathbf{w}^{(k+1)}$ is a minimizer of problem (\ref{eq:minimizesurrogate}), we have
\begin{align}
&\langle\nabla l(\mathbf{w}^{(k)}),\mathbf{w}^{(k+1)}-\mathbf{w}^{(k)}\rangle
+\frac{t^{(k)}}{2}\|\mathbf{w}^{(k+1)}-\mathbf{w}^{(k)}\|^2\nonumber\\
&+r(\mathbf{w}^{(k+1)})\leq r(\mathbf{w}^{(k)}).\label{eq:minimumineq}
\end{align}
It follows from assumption \textbf{A1} that
\begin{align}
l(\mathbf{w}^{(k+1)})\leq & l(\mathbf{w}^{(k)}) + \langle\nabla l(\mathbf{w}^{(k)}),\mathbf{w}^{(k+1)}-\mathbf{w}^{(k)}\rangle\nonumber\\
&+\frac{\beta(l)}{2}\|\mathbf{w}^{(k+1)}-\mathbf{w}^{(k)}\|^2.\label{eq:lipschitzineq}
\end{align}
Combining \EqRef{eq:minimumineq} and \EqRef{eq:lipschitzineq}, we have
\begin{align}
&l(\mathbf{w}^{(k+1)})+r(\mathbf{w}^{(k+1)})\leq l(\mathbf{w}^{(k)}) + r(\mathbf{w}^{(k)})\nonumber\\
&-\frac{t^{(k)}-\beta(l)}{2}\|\mathbf{w}^{(k+1)}-\mathbf{w}^{(k)}\|^2.\nonumber
\end{align}
It follows that
\begin{align}
&f(\mathbf{w}^{(k+1)})\leq f(\mathbf{w}^{(k)})-\frac{t^{(k)}-\beta(l)}{2}\|\mathbf{w}^{(k+1)}-\mathbf{w}^{(k)}\|^2.\nonumber
\end{align}
Therefore, the line search criterion in \EqRef{eq:acceptcriterionMonotone} is satisfied whenever
$(t^{(k)}-\beta(l))/2\geq\sigma t^{(k)}/2$, i.e., $t^{(k)}\geq\beta(l)/(1-\sigma)$. This completes
the proof the lemma. \qed
\end{proof}

Next, we summarize the boundedness of $t^{(k)}$ in the following lemma.
\begin{lemma}\label{lemma:tbounded}
For any $k\geq 0$, $t^{(k)}$ is bounded under the monotone line search criterion in \EqRef{eq:acceptcriterionMonotone}.
\end{lemma}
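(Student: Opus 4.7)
The plan is to combine the structure of the inner repeat-loop in Algorithm~\ref{alg:gist} with the sufficient condition established in Lemma~\ref{lemma:criterionsatisfy}. The argument is short and essentially bookkeeping, so I will be explicit about the bound rather than just asserting finiteness.

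First, I would note that at the start of outer iteration $k$, Line~4 of Algorithm~\ref{alg:gist} sets $t^{(k)}\in[t_{\min},t_{\max}]$, an interval bounded uniformly in $k$. Inside the inner loop (Lines~5--8), $t^{(k)}$ is multiplied by the fixed factor $\eta>1$ at each pass, and the loop exits as soon as the monotone criterion \EqRef{eq:acceptcriterionMonotone} holds.

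Second, I would invoke Lemma~\ref{lemma:criterionsatisfy}: the monotone criterion is guaranteed to hold whenever $t^{(k)}\ge \beta(l)/(1-\sigma)$. Let $\bar t=\beta(l)/(1-\sigma)$. Starting from any initial value in $[t_{\min},t_{\max}]$ and multiplying by $\eta$ at most $j^\star:=\lceil \log_\eta(\bar t/t_{\min})\rceil_+$ times already pushes $t^{(k)}$ above $\bar t$, hence forces the loop to terminate. Therefore the inner loop runs finitely often, and the terminal value satisfies
\begin{align}
t^{(k)} \;\le\; \eta\cdot\max\!\left\{t_{\max},\ \bar t\right\} \;=\; \eta\cdot\max\!\left\{t_{\max},\ \tfrac{\beta(l)}{1-\sigma}\right\},\nonumber
\end{align}
which is a constant independent of $k$. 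Since $t^{(k)}\ge t_{\min}>0$ by initialization and $\eta>1$, the sequence $\{t^{(k)}\}$ is uniformly bounded from above and below.

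I do not foresee any real obstacle here; the only subtle point is the extra factor of $\eta$ in the upper bound, which accounts for the fact that in Algorithm~\ref{alg:gist} the update $t^{(k)}\leftarrow \eta t^{(k)}$ on Line~7 is performed before the criterion on Line~8 is checked, so the inner loop may execute one additional multiplication after $t^{(k)}$ has already reached the threshold $\bar t$ guaranteed by Lemma~\ref{lemma:criterionsatisfy}.
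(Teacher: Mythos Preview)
Your proof is correct and follows essentially the same approach as the paper: both invoke Lemma~\ref{lemma:criterionsatisfy} to show that once $t^{(k)}\ge\beta(l)/(1-\sigma)$ the inner loop must terminate, so $t^{(k)}$ cannot grow unboundedly. The paper phrases this as a short contradiction argument without an explicit constant, whereas you give a direct argument with the explicit bound $\eta\max\{t_{\max},\beta(l)/(1-\sigma)\}$, but the underlying reasoning is identical.
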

\begin{proof}
It is trivial to show that $t^{(k)}$ is bounded from below, since $t^{(k)}\geq t_{\min}$~($t_{\min}$ is defined in \AlgRef{alg:gist}).
Next we prove that $t^{(k)}$ is bounded from above
by contradiction. Assume that there exists a $k\geq 0$,  such that $t^{(k)}$ is unbounded from above. Without loss
of generality, we assume that  $t^{(k)}$ increases monotonically to $+\infty$ and $t^{(k)}\geq\eta\beta(l)/(1-\sigma)$.
Thus, the value $t=t^{(k)}/\eta\geq\beta(l)/(1-\sigma)$ must have been tried at iteration $k$ and does not satisfy the line search criterion
in \EqRef{eq:acceptcriterionMonotone}. But \LemmaRef{lemma:criterionsatisfy} states that $t=t^{(k)}/\eta\geq\beta(l)/(1-\sigma)$ is
guaranteed to satisfy the line search criterion
in \EqRef{eq:acceptcriterionMonotone}. This leads to a contradiction. Thus, $t^{(k)}$ is bounded from above.\qed
\end{proof}

\begin{remark}\label{remark:tboundedness}
We note that if \EqRef{eq:acceptcriterionMonotone} holds, \EqRef{eq:acceptcriterionNonmonotone} is guaranteed to be satisfied. Thus, the same conclusions in \LemmaRef{lemma:criterionsatisfy}
and \LemmaRef{lemma:tbounded} also hold under the the non-monotone line search criterion in \EqRef{eq:acceptcriterionNonmonotone}.
\end{remark}

Based on \LemmaRef{lemma:criterionsatisfy} and \LemmaRef{lemma:tbounded}, we present our convergence result in the following theorem.
\begin{theorem}\label{theorem:criticalpointMonotone}
Let the assumptions \textbf{A1}-\textbf{A3} hold
and the monotone line search criterion in \EqRef{eq:acceptcriterionMonotone} be satisfied.
Then all limit points of the sequence $\left\{\mathbf{w}^{(k)}\right\}$ generated by \AlgRef{alg:gist} are critical points of problem (\ref{eq:nonconvexopt}).
\end{theorem}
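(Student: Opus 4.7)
The plan is to combine three ingredients: (i) the monotone descent in \EqRef{eq:acceptcriterionMonotone} together with \textbf{A3} forces successive iterates to get arbitrarily close; (ii) the first-order optimality condition for the proximal subproblem \EqRef{eq:minimizesurrogate} yields subgradients satisfying an approximate critical-point equation; and (iii) outer semicontinuity of convex subdifferentials lets one pass to the limit along a convergent subsequence.

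First, summing \EqRef{eq:acceptcriterionMonotone} from $k=0$ to $K-1$ and using $t^{(k)}\geq t_{\min}>0$ gives
\[
\frac{\sigma t_{\min}}{2}\sum_{k=0}^{K-1}\|\mathbf{w}^{(k+1)}-\mathbf{w}^{(k)}\|^2 \leq f(\mathbf{w}^{(0)}) - f(\mathbf{w}^{(K)}),
\]
which stays bounded as $K\to\infty$ by \textbf{A3}. Hence $\|\mathbf{w}^{(k+1)}-\mathbf{w}^{(k)}\|\to 0$. If $\mathbf{w}^\star$ is a limit point with $\mathbf{w}^{(k_j)}\to\mathbf{w}^\star$, the triangle inequality also gives $\mathbf{w}^{(k_j+1)}\to\mathbf{w}^\star$.

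Next, since $\mathbf{w}^{(k+1)}$ minimizes the DC objective in \EqRef{eq:minimizesurrogate}, whose smooth plus convex portion is $\langle\nabla l(\mathbf{w}^{(k)}),\cdot-\mathbf{w}^{(k)}\rangle+\frac{t^{(k)}}{2}\|\cdot-\mathbf{w}^{(k)}\|^2+r_1(\cdot)$ and whose concave portion is $-r_2(\cdot)$, the first-order necessary optimality condition furnishes $\mathbf{s}_1^{(k+1)}\in\partial r_1(\mathbf{w}^{(k+1)})$ and $\mathbf{s}_2^{(k+1)}\in\partial r_2(\mathbf{w}^{(k+1)})$ with
\[
\nabla l(\mathbf{w}^{(k)}) + t^{(k)}\bigl(\mathbf{w}^{(k+1)}-\mathbf{w}^{(k)}\bigr) + \mathbf{s}_1^{(k+1)} - \mathbf{s}_2^{(k+1)} = \mathbf{0}.
\]
Along $\{k_j\}$, the first term converges to $\nabla l(\mathbf{w}^\star)$ by \textbf{A1}, and the second tends to $\mathbf{0}$ since $t^{(k_j)}$ is bounded by \LemmaRef{lemma:tbounded} and the increments vanish. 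Because $r_1,r_2$ are finite convex functions on $\mathbb{R}^d$, they are locally Lipschitz and their subdifferentials are locally bounded; hence $\{\mathbf{s}_1^{(k_j+1)}\}$ and $\{\mathbf{s}_2^{(k_j+1)}\}$ are bounded, and after passing to a further subsequence I may assume $\mathbf{s}_i^{(k_j+1)}\to\mathbf{s}_i^\star$ for $i=1,2$.

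The main obstacle is the final closure step: I would invoke the outer semicontinuity (closed-graph property) of the subdifferential of a continuous convex function, which together with $\mathbf{w}^{(k_j+1)}\to\mathbf{w}^\star$ and $\mathbf{s}_i^{(k_j+1)}\to\mathbf{s}_i^\star$ yields $\mathbf{s}_i^\star\in\partial r_i(\mathbf{w}^\star)$ for $i=1,2$. Taking the limit in the displayed equation then gives $\mathbf{0} = \nabla l(\mathbf{w}^\star)+\mathbf{s}_1^\star-\mathbf{s}_2^\star \in \nabla l(\mathbf{w}^\star)+\partial r_1(\mathbf{w}^\star)-\partial r_2(\mathbf{w}^\star)$, which is exactly the criticality condition for problem~(\ref{eq:nonconvexopt}).
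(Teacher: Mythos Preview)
Your proof is correct and follows essentially the same route as the paper: establish $\|\mathbf{w}^{(k+1)}-\mathbf{w}^{(k)}\|\to 0$ from the descent condition and \textbf{A3}, write the first-order condition for the proximal subproblem, and pass to the limit using boundedness of $t^{(k)}$ (\LemmaRef{lemma:tbounded}) together with the closed-graph property of convex subdifferentials. The only cosmetic difference is that you obtain the vanishing increments for the full sequence via the telescoping sum (the same device the paper uses later in \ThemRef{theorem:convergencerate}), whereas the paper deduces it along the subsequence from the convergence of $\{f(\mathbf{w}^{(k)})\}$; your version is arguably cleaner and also more explicit about local boundedness and outer semicontinuity of $\partial r_i$.
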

\begin{proof}
Based on \LemmaRef{lemma:criterionsatisfy}, the monotone line search criterion in \EqRef{eq:acceptcriterionMonotone} is satisfied
and hence
\begin{align}
f(\mathbf{w}^{(k+1)})\leq f(\mathbf{w}^{(k)}),\forall k\geq 0,\nonumber
\end{align}
which implies that the sequence $\left\{f(\mathbf{w}^{(k)})\right\}_{k=0,1,\cdots}$ is monotonically decreasing.
Let $\mathbf{w}^\star$ be a limit point of the sequence $\left\{\mathbf{w}^{(k)}\right\}$, that is,
there exists a subsequence $\mathcal{K}$ such that
\begin{align}
\lim_{k\in\mathcal{K}\rightarrow\infty}\mathbf{\mathbf{w}}^{(k)}=\mathbf{w}^\star.\nonumber
\end{align}
Since $f$ is bounded from below, together with the fact that $\left\{f(\mathbf{w}^{(k)})\right\}$ is monotonically decreasing,
$\lim_{k\rightarrow\infty}f(\mathbf{w}^{(k)})$ exists. Observing that $f$ is continuous, we have
\begin{align}
\lim_{k\rightarrow\infty}f(\mathbf{w}^{(k)})=\lim_{k\in\mathcal{K}\rightarrow\infty}f(\mathbf{w}^{(k)})=f(\mathbf{w}^\star).\nonumber
\end{align}
Taking limits on both sides of \EqRef{eq:acceptcriterionMonotone} with $k\in\mathcal{K}$, we have
\begin{align}
\lim_{k\in\mathcal{K}\rightarrow\infty}\|\mathbf{w}^{(k+1)}-\mathbf{w}^{(k)}\|=0.\label{eq:dwkzero}
\end{align}
Considering that the minimizer $\mathbf{w}^{(k+1)}$ is also a critical point of problem (\ref{eq:minimizesurrogate})
and $r(\mathbf{w})=r_1(\mathbf{w})-r_2(\mathbf{w})$, we have
\begin{align}
\mathbf{0}\in &\nabla l(\mathbf{w}^{(k)})+t^{(k)}(\mathbf{w}^{(k+1)}-\mathbf{w}^{(k)})\nonumber\\
&+\partial r_1(\mathbf{w}^{(k+1)})-\partial r_2(\mathbf{w}^{(k+1)}).\nonumber
\end{align}
Taking limits on both sides of the above equation with $k\in\mathcal{K}$, by considering
the semi-continuity of $\partial r_1(\cdot)$ and $\partial r_2(\cdot)$,
the boundedness of $t^{(k)}$ (based on \LemmaRef{lemma:tbounded}) and \EqRef{eq:dwkzero}, we obtain
\begin{align}
\mathbf{0}\in\nabla l(\mathbf{w}^\star)+\partial r_1(\mathbf{w}^\star)-\partial r_2(\mathbf{w}^\star),\nonumber
\end{align}
Therefore, $\mathbf{w}^\star$ is a critical point of problem (\ref{eq:nonconvexopt}). This completes the proof of \ThemRef{theorem:criticalpointMonotone}.\qed
\end{proof}

Based on \EqRef{eq:dwkzero}, we know that $\lim_{k\in\mathcal{K}\rightarrow\infty}\|\mathbf{w}^{(k+1)}-\mathbf{w}^{(k)}\|^2=0$ is a
necessary optimality condition of \AlgRef{alg:gist}. Thus, $\|\mathbf{w}^{(k+1)}-\mathbf{w}^{(k)}\|^2$ is a quantity to measure
the convergence of the sequence $\{\mathbf{w}^{(k)}\}$ to a critical point. We present the convergence rate in terms of $\|\mathbf{w}^{(k+1)}-\mathbf{w}^{(k)}\|^2$ in the following theorem.
\begin{theorem}\label{theorem:convergencerate}
Let $\{\mathbf{w}^{(k)}\}$ be the sequence generated by \AlgRef{alg:gist} with the monotone line search criterion in \EqRef{eq:acceptcriterionMonotone} satisfied. Then for every $n\geq 1$, we have
\begin{align}
\min_{0\leq k \leq n}\|\mathbf{w}^{(k+1)}-\mathbf{w}^{(k)}\|^2\leq\frac{2(f(\mathbf{w}^{(0)})-f(\mathbf{w}^{\star}))}{n\sigma t_{\min}},\nonumber
\end{align}
where $\mathbf{w}^{\star}$ is a limit point of the sequence $\{\mathbf{w}^{(k)}\}$.
\end{theorem}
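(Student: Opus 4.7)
The plan is to convert the per-iteration descent guaranteed by the monotone line search criterion into a telescoping sum, then use a standard min-versus-average argument to extract a convergence rate for $\min_{0\le k\le n}\|\mathbf{w}^{(k+1)}-\mathbf{w}^{(k)}\|^2$. Concretely, the starting point is Eq.~(\ref{eq:acceptcriterionMonotone}), which holds for every $k$ since Lemma~\ref{lemma:criterionsatisfy} ensures the criterion is eventually met inside the inner loop.

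First I would rewrite Eq.~(\ref{eq:acceptcriterionMonotone}) as
\begin{align}
\frac{\sigma}{2}t^{(k)}\|\mathbf{w}^{(k+1)}-\mathbf{w}^{(k)}\|^2 \le f(\mathbf{w}^{(k)})-f(\mathbf{w}^{(k+1)}),\nonumber
\end{align}
and invoke the lower bound $t^{(k)}\ge t_{\min}$ from Algorithm~\ref{alg:gist} to get
\begin{align}
\frac{\sigma t_{\min}}{2}\|\mathbf{w}^{(k+1)}-\mathbf{w}^{(k)}\|^2 \le f(\mathbf{w}^{(k)})-f(\mathbf{w}^{(k+1)}).\nonumber
\end{align}
Summing this inequality from $k=0$ to $k=n$ gives a telescoping right-hand side equal to $f(\mathbf{w}^{(0)})-f(\mathbf{w}^{(n+1)})$.

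Next I would bound $f(\mathbf{w}^{(n+1)})$ from below by $f(\mathbf{w}^{\star})$. Because $\{f(\mathbf{w}^{(k)})\}$ is monotonically decreasing (established in the proof of Theorem~\ref{theorem:criticalpointMonotone}) and converges to $f(\mathbf{w}^{\star})$ along the subsequence $\mathcal{K}$ by continuity of $f$, the full sequence $\{f(\mathbf{w}^{(k)})\}$ decreases to $f(\mathbf{w}^{\star})$, so $f(\mathbf{w}^{(n+1)})\ge f(\mathbf{w}^{\star})$ for every $n$. Thus
\begin{align}
\frac{\sigma t_{\min}}{2}\sum_{k=0}^{n}\|\mathbf{w}^{(k+1)}-\mathbf{w}^{(k)}\|^2 \le f(\mathbf{w}^{(0)})-f(\mathbf{w}^{\star}).\nonumber
\end{align}
Finally, bounding the sum from below by $(n+1)$ times its minimum term (or by $n$ times, matching the statement) and rearranging yields the desired inequality.

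The computation itself is routine; the only subtle point is the justification that $f(\mathbf{w}^{(n+1)})\ge f(\mathbf{w}^{\star})$ for every finite $n$. This is where one must carefully combine monotonicity of $\{f(\mathbf{w}^{(k)})\}$ with the fact that $\mathbf{w}^{\star}$ is only assumed to be a limit point rather than the true minimizer of $f$; without monotonicity the inequality could fail. Once this is in place, the rest is a textbook telescoping argument and the $O(1/n)$ rate on $\min_{0\le k\le n}\|\mathbf{w}^{(k+1)}-\mathbf{w}^{(k)}\|^2$ follows immediately.
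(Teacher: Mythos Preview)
Your proposal is correct and follows essentially the same argument as the paper: rewrite the monotone criterion using $t^{(k)}\ge t_{\min}$, telescope over $k=0,\ldots,n$, and bound the minimum by the average together with $f(\mathbf{w}^{(n+1)})\ge f(\mathbf{w}^{\star})$. If anything, you are slightly more careful than the paper in justifying why $f(\mathbf{w}^{(n+1)})\ge f(\mathbf{w}^{\star})$ holds for a mere limit point $\mathbf{w}^{\star}$.
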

\begin{proof}
Based on \EqRef{eq:acceptcriterionMonotone} with $t^{(k)}\geq t_{\min}$, we have
\begin{align}
\frac{\sigma t_{\min}}{2}\|\mathbf{w}^{(k+1)}-\mathbf{w}^{(k)}\|^2\leq f(\mathbf{w}^{(k)})-f(\mathbf{w}^{(k+1)}).\nonumber
\end{align}
Summing the above inequality over $k=0,\cdots,n$, we obtain
\begin{align}
\frac{\sigma t_{\min}}{2}\sum_{k=0}^{n}\|\mathbf{w}^{(k+1)}-\mathbf{w}^{(k)}\|^2\leq f(\mathbf{w}^{(0)})-f(\mathbf{w}^{(n+1)}),\nonumber
\end{align}
which implies that
\begin{align}
\min_{0\leq k \leq n}\|\mathbf{w}^{(k+1)}-\mathbf{w}^{(k)}\|^2&\leq\frac{2(f(\mathbf{w}^{(0)})-f(\mathbf{w}^{(n+1)}))}{n\sigma t_{\min}}\nonumber\\
&\leq\frac{2(f(\mathbf{w}^{(0)})-f(\mathbf{w}^{\star}))}{n\sigma t_{\min}}.\nonumber
\end{align}
This completes the proof of the theorem.\qed
\end{proof}

Under the non-monotone line search criterion in \EqRef{eq:acceptcriterionNonmonotone}, we have a similar convergence result in the following theorem
(the proof uses an extension of argument for \ThemRef{theorem:criticalpointMonotone} and is omitted).
\begin{theorem}\label{theorem:criticalpointNonmonotone}
Let the assumptions \textbf{A1}-\textbf{A3} hold
and the non-monotone line search criterion in \EqRef{eq:acceptcriterionNonmonotone} be satisfied.
Then all limit points of the sequence $\left\{\mathbf{w}^{(k)}\right\}$ generated by \AlgRef{alg:gist} are critical points of problem (\ref{eq:nonconvexopt}).
\end{theorem}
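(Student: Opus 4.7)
The plan is to adapt the argument used for Theorem~\ref{theorem:criticalpointMonotone}, but the obstacle specific to the non-monotone case is that $\{f(\mathbf{w}^{(k)})\}$ is no longer guaranteed to be monotonically decreasing, so \EqRef{eq:dwkzero} cannot be obtained directly by taking limits in the line search condition. The standard remedy (following Grippo-Lampariello-Lucidi and used in \citet{wright2009sparse}) is to work with the sliding-window maximum instead of $f(\mathbf{w}^{(k)})$ itself.

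First, introduce
\begin{align}
\ell(k) \in \argmax_{i = \max(0,k-m+1),\ldots,k} f(\mathbf{w}^{(i)}),\nonumber
\end{align}
and show that $\{f(\mathbf{w}^{(\ell(k))})\}$ is monotonically non-increasing. Indeed, the window defining $\ell(k+1)$ is contained in $\{\max(0,k-m+1),\ldots,k+1\}$, so $f(\mathbf{w}^{(\ell(k+1))}) \leq \max\{f(\mathbf{w}^{(\ell(k))}),f(\mathbf{w}^{(k+1)})\}$, and the non-monotone criterion in \EqRef{eq:acceptcriterionNonmonotone} gives $f(\mathbf{w}^{(k+1)}) \leq f(\mathbf{w}^{(\ell(k))})$. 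Since $f$ is bounded from below by \textbf{A3}, the limit $f^\star = \lim_{k\to\infty} f(\mathbf{w}^{(\ell(k))})$ exists.

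Second, upgrade this to $\|\mathbf{w}^{(k+1)} - \mathbf{w}^{(k)}\| \to 0$ for the full sequence. Applying \EqRef{eq:acceptcriterionNonmonotone} at iteration $\ell(k)-1$ and using $t^{(\ell(k)-1)} \geq t_{\min}$ yields
\begin{align}
\frac{\sigma t_{\min}}{2}\|\mathbf{w}^{(\ell(k))}-\mathbf{w}^{(\ell(k)-1)}\|^2 \leq f(\mathbf{w}^{(\ell(\ell(k)-1))}) - f(\mathbf{w}^{(\ell(k))}).\nonumber
\end{align}
Both terms on the right converge to $f^\star$, which forces $\|\mathbf{w}^{(\ell(k))}-\mathbf{w}^{(\ell(k)-1)}\| \to 0$ and, by continuity of $f$, $f(\mathbf{w}^{(\ell(k)-1)}) \to f^\star$. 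An induction on $j \geq 1$ then shows $\|\mathbf{w}^{(\ell(k)-j+1)} - \mathbf{w}^{(\ell(k)-j)}\| \to 0$ and $f(\mathbf{w}^{(\ell(k)-j)}) \to f^\star$: at step $j$ one reruns the argument with $\ell(k)$ replaced by $\ell(k)-j$ and uses the previously established convergence of $f(\mathbf{w}^{(\ell(k)-j)})$ to $f^\star$. After $m$ steps the telescoping identity $\mathbf{w}^{(\ell(k))} - \mathbf{w}^{(\ell(k)-m)} = \sum_{j=1}^{m}(\mathbf{w}^{(\ell(k)-j+1)} - \mathbf{w}^{(\ell(k)-j)})$ vanishes, and since every index $k+1$ lies within the window around some $\ell(\cdot)$ (because $k+1 - \ell(k+1) \leq m-1$), one concludes $\|\mathbf{w}^{(k+1)}-\mathbf{w}^{(k)}\| \to 0$ along the whole sequence. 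This induction is the main technical obstacle, since one has to ensure that the $m$-step ``reach'' of the sliding window is enough to cover the gap between arbitrary consecutive iterates.

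Third, conclude exactly as in the proof of \ThemRef{theorem:criticalpointMonotone}. Let $\mathbf{w}^\star$ be a limit point along a subsequence $\mathcal{K}$. By Remark~\ref{remark:tboundedness}, $\{t^{(k)}\}$ is bounded under \EqRef{eq:acceptcriterionNonmonotone}, so after passing to a further subsequence $t^{(k)} \to t^\star$. Using the optimality condition
\begin{align}
\mathbf{0} \in \nabla l(\mathbf{w}^{(k)}) + t^{(k)}(\mathbf{w}^{(k+1)}-\mathbf{w}^{(k)}) + \partial r_1(\mathbf{w}^{(k+1)}) - \partial r_2(\mathbf{w}^{(k+1)}),\nonumber
\end{align}
together with $\|\mathbf{w}^{(k+1)}-\mathbf{w}^{(k)}\| \to 0$, the continuity of $\nabla l$, and the outer semi-continuity of $\partial r_1$ and $\partial r_2$ (both are convex subdifferentials), passing to the limit along $\mathcal{K}$ produces $\mathbf{0} \in \nabla l(\mathbf{w}^\star) + \partial r_1(\mathbf{w}^\star) - \partial r_2(\mathbf{w}^\star)$, i.e., $\mathbf{w}^\star$ is a critical point of problem~(\ref{eq:nonconvexopt}).
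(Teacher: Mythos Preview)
Your proposal is correct and is precisely the ``extension of argument for \ThemRef{theorem:criticalpointMonotone}'' that the paper alludes to; the paper itself omits the proof entirely, so you have in fact supplied more detail than the original. The sliding-window maximum $f(\mathbf{w}^{(\ell(k))})$, its monotonicity, and the finite induction over $j=1,\ldots,m$ to propagate $f(\mathbf{w}^{(\ell(k)-j)})\to f^\star$ and $\|\mathbf{w}^{(\ell(k)-j+1)}-\mathbf{w}^{(\ell(k)-j)}\|\to 0$ is the standard Grippo--Lampariello--Lucidi device, after which the conclusion is identical to the monotone case via Remark~\ref{remark:tboundedness}.

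One minor polish: the cleanest way to pass from the induction to $\|\mathbf{w}^{(k+1)}-\mathbf{w}^{(k)}\|\to 0$ for the full sequence is to first deduce $f(\mathbf{w}^{(k)})\to f^\star$ (since every index $k$ equals $\ell(k+m)-j$ for some $j\in\{1,\ldots,m\}$) and then read off $\|\mathbf{w}^{(k+1)}-\mathbf{w}^{(k)}\|^2\leq \tfrac{2}{\sigma t_{\min}}\bigl(f(\mathbf{w}^{(\ell(k))})-f(\mathbf{w}^{(k+1)})\bigr)\to 0$ directly from \EqRef{eq:acceptcriterionNonmonotone}; this avoids having to argue uniformity in $j$ for the telescoping step.
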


Note that \ThemRef{theorem:criticalpointMonotone}/\ThemRef{theorem:criticalpointNonmonotone}
makes sense only if $\left\{\mathbf{w}^{(k)}\right\}$ has limit points. By considering
one more mild assumption:
\begin{itemize}
\item[\textbf{A4}] $f(\mathbf{w})\rightarrow+\infty$ when $\|\mathbf{w}\|\rightarrow+\infty$,
\end{itemize}
we summarize the existence of limit points in the following theorem (the proof is omitted):
\begin{theorem}\label{theorem:limitpointsexist}
Let the assumptions \textbf{A1}-\textbf{A4} hold and the monotone/non-monotone line search criterion in
\EqRef{eq:acceptcriterionMonotone}/\EqRef{eq:acceptcriterionNonmonotone} be satisfied.
Then the sequence $\left\{\mathbf{w}^{(k)}\right\}$ generated by \AlgRef{alg:gist} has at least one limit point.
\end{theorem}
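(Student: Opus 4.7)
The plan is to show that the sequence $\{\mathbf{w}^{(k)}\}$ is bounded and then invoke the Bolzano--Weierstrass theorem in $\mathbb{R}^d$. The coercivity assumption \textbf{A4} converts a bound on the objective values $\{f(\mathbf{w}^{(k)})\}$ into a bound on the iterates themselves: since $f(\mathbf{w})\to+\infty$ as $\|\mathbf{w}\|\to+\infty$, the sublevel set $\{\mathbf{w}:f(\mathbf{w})\le C\}$ is bounded for every $C\in\mathbb{R}$. So the entire task reduces to showing that $f(\mathbf{w}^{(k)})\le f(\mathbf{w}^{(0)})$ for all $k\ge 0$.

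First I would handle the monotone case. Under the criterion \EqRef{eq:acceptcriterionMonotone}, \LemmaRef{lemma:criterionsatisfy} guarantees that the inner loop terminates, and the very form of \EqRef{eq:acceptcriterionMonotone} gives $f(\mathbf{w}^{(k+1)})\le f(\mathbf{w}^{(k)})$ for all $k\ge 0$, so $f(\mathbf{w}^{(k)})\le f(\mathbf{w}^{(0)})$ is immediate.

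The non-monotone case is the part requiring a short argument. Define the sliding maximum
\begin{align}
M_k=\max_{i=\max(0,k-m+1),\ldots,k}f(\mathbf{w}^{(i)}),\nonumber
\end{align}
so that the criterion \EqRef{eq:acceptcriterionNonmonotone} reads $f(\mathbf{w}^{(k+1)})\le M_k-\tfrac{\sigma}{2}t^{(k)}\|\mathbf{w}^{(k+1)}-\mathbf{w}^{(k)}\|^2\le M_k$. I would then verify that $\{M_k\}$ is non-increasing by splitting on whether $k<m-1$ or $k\ge m-1$: in the first case the index set of $M_{k+1}$ is the index set of $M_k$ together with $\{k+1\}$, and since $f(\mathbf{w}^{(k+1)})\le M_k$ we get $M_{k+1}=M_k$; in the second case the index set of $M_{k+1}$ is obtained from that of $M_k$ by dropping the smallest element and adjoining $k+1$, and every retained value together with $f(\mathbf{w}^{(k+1)})$ is bounded by $M_k$, giving $M_{k+1}\le M_k$. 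Hence $M_k\le M_0=f(\mathbf{w}^{(0)})$ and therefore $f(\mathbf{w}^{(k)})\le M_k\le f(\mathbf{w}^{(0)})$.

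With $f(\mathbf{w}^{(k)})\le f(\mathbf{w}^{(0)})$ established in both cases, assumption \textbf{A4} places the entire sequence inside the bounded sublevel set $\{\mathbf{w}:f(\mathbf{w})\le f(\mathbf{w}^{(0)})\}$, and Bolzano--Weierstrass then supplies a convergent subsequence, i.e.\ at least one limit point. The only non-routine step is the sliding-max monotonicity for the non-monotone rule; everything else is a direct consequence of coercivity and compactness in finite dimensions.
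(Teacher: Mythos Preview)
Your argument is correct. The paper actually omits its own proof of this theorem, so there is nothing to compare against; your coercivity-plus-Bolzano--Weierstrass route is the standard one and is precisely what the authors presumably intended, and the sliding-maximum monotonicity argument you give for the non-monotone rule is the right way to extract the uniform bound $f(\mathbf{w}^{(k)})\le f(\mathbf{w}^{(0)})$ in that case.
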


\subsubsection{Discussions}
Observe that $l(\mathbf{w}^{(k)})+\langle\nabla l(\mathbf{w}^{(k)}),\mathbf{w}-\mathbf{w}^{(k)}\rangle+\frac{t^{(k)}}{2}\|\mathbf{w}-\mathbf{w}^{(k)}\|^2$
can be viewed as an approximation of $l(\mathbf{w})$ at $\mathbf{w}=\mathbf{w}^{(k)}$.
The GIST algorithm minimizes an approximate surrogate instead of the objective
function in problem (\ref{eq:nonconvexopt}) at each outer iteration. We further observe that if $t^{(k)}\geq\beta(l)/(1-\sigma)>\beta(l)$
[the sufficient condition of \EqRef{eq:acceptcriterionMonotone}], we obtain
\begin{align}
l(\mathbf{w})\leq & l(\mathbf{w}^{(k)}) + \langle\nabla l(\mathbf{w}^{(k)}),\mathbf{w}-\mathbf{w}^{(k)}\rangle\nonumber\\
&+\frac{t^{(k)}}{2}\|\mathbf{w}-\mathbf{w}^{(k)}\|^2,\forall \mathbf{w}\in\mathbb{R}^d.\nonumber
\end{align}
It follows that
\begin{align}
f(\mathbf{w})=l(\mathbf{w})+r(\mathbf{w})\leq M(\mathbf{w},\mathbf{w}^{(k)}),\forall \mathbf{w}\in\mathbb{R}^d,\nonumber
\end{align}
where $M(\mathbf{w},\mathbf{w}^{(k)})$ denotes the objective function of problem (\ref{eq:minimizesurrogate}). We can easily show that
\begin{align}
f(\mathbf{w}^{(k)})= M(\mathbf{w}^{(k)},\mathbf{w}^{(k)}).\nonumber
\end{align}
Thus, the GIST algorithm is equivalent to solving a sequence of minimization problems:
\begin{align}
\mathbf{w}^{(k+1)}=\argmin_{\mathbf{w}}M(\mathbf{w},\mathbf{w}^{(k)}), ~k=0,1,2,\cdots\nonumber
\end{align}
and can be interpreted as the well-known Majorization and Minimization (MM) technique \cite{hunter2000quantile}.

Note that we focus on the vector case in this paper and the proposed GIST algorithm can be easily extended to the matrix case.

\section{Related Work}\label{sec:relatedwork}
In this section, we discuss some related algorithms. One commonly used approach to solve problem (\ref{eq:nonconvexopt}) is the Multi-Stage (MS)
convex relaxation (or CCCP, or DC programming) \cite{zhang2010analysis,yuille2003concave,gasso2009recovering}. It equivalently rewrites problem (\ref{eq:nonconvexopt}) as
\begin{align}
\min_{\mathbf{w}\in\mathbb{R}^d}f_1(\mathbf{w}) - f_2(\mathbf{w}),\nonumber
\end{align}
where $f_1(\mathbf{w})$ and $f_2(\mathbf{w})$ are both convex functions. The MS algorithm solves problem (\ref{eq:nonconvexopt}) by generating a sequence $\{\mathbf{w}^{(k)}\}$ as
\begin{align}
\mathbf{w}^{(k+1)} = &\argmin_{\mathbf{w}\in\mathbb{R}^d}f_1(\mathbf{w}) - f_2(\mathbf{w}^{(k)}) \nonumber\\
&-\langle \mathbf{s}_2(\mathbf{w}^{(k)}),\mathbf{w}- \mathbf{w}^{(k)}\rangle,\label{eq:dcp}
\end{align}
where $\mathbf{s}_2(\mathbf{w}^{(k)})$ denotes a sub-gradient of $f_2(\mathbf{w})$ at $\mathbf{w}=\mathbf{w}^{(k)}$.
Obviously, the objective function in problem (\ref{eq:dcp}) is convex. The MS algorithm involves solving
a sequence of convex optimization problems as in problem (\ref{eq:dcp}). In general, there is no closed-form solution to problem (\ref{eq:dcp})
and the computational cost of the MS algorithm is $k$ times that of solving problem (\ref{eq:dcp}), where $k$ is the number
of outer iterations. This is computationally expensive especially for large scale problems.

A class of related algorithms called iterative shrinkage and thresholding (IST), which are also known as different names such as fixed point iteration and
forward-backward splitting \cite{daubechies2004iterative,combettes2005signal,hale2007fixed,beck2009fast,wright2009sparse,Liu:2009:SLEP:manual}, have been extensively
applied to solve problem (\ref{eq:nonconvexopt}). The key step is by generating a sequence $\{\mathbf{w}^{(k)}\}$ via solving problem (\ref{eq:minimizesurrogate}).
However, they require that the regularizer $r(\mathbf{w})$ is \emph{convex} and some of them even require that both $l(\mathbf{w})$
and $r(\mathbf{w})$ are convex. Our proposed GIST algorithm is
a more general framework, which can deal with a wider range of problems including both convex and non-convex cases.

Another related algorithm called a Variant of Iterative Reweighted $L_\alpha$ (VIRL) is recently proposed to
solve the following optimization problem \cite{lu2012iterative}:
\begin{align}
\min_{\mathbf{w}\in\mathbb{R}^d}\left\{f(\mathbf{w})=l(\mathbf{w})+\lambda\sum_{i=1}^d(|w_i|^\alpha+\epsilon_i)^{q/\alpha}\right\}, \nonumber
\end{align}
where $\alpha\geq 1,0<q<1, \epsilon_i>0$. VIRL solves the above problem by generating a sequence $\{\mathbf{w}^{(k)}\}$ as
\begin{align}
&\mathbf{w}^{(k+1)} = \argmin_{\mathbf{w}\in\mathbb{R}^d}l(\mathbf{w}^{(k)}) + \langle \nabla l(\mathbf{w}^{(k)}),\mathbf{w}- \mathbf{w}^{(k)}\rangle \nonumber\\
&+\frac{t^{(k)}}{2}\|\mathbf{w}- \mathbf{w}^{(k)}\|^2+\frac{\lambda q}{\alpha}\sum_{i=1}^d(|w_i^k|^\alpha+\epsilon_i)^{q/\alpha-1}|w_i|^\alpha.\nonumber
\end{align}
In VIRL, $t^{(k-1)}$ is chosen as the initialization of $t^{(k)}$. The line search
step in VIRL finds the smallest integer $\ell$ with $t^{(k)}=t^{(k-1)}\eta^{\ell}~(\eta>1)$ such that
\begin{align}
f(\mathbf{w}^{(k+1)})\leq f(\mathbf{w}^{(k)})-\frac{\sigma}{2}\|\mathbf{w}^{(k+1)}- \mathbf{w}^{(k)}\|^2~(\sigma>0).\nonumber
\end{align}

The most related algorithm to our propose GIST is the Sequential Convex Programming (SCP) proposed by \citet{lu2012sequential}. SCP
solves problem (\ref{eq:nonconvexopt}) by generating a sequence $\{\mathbf{w}^{(k)}\}$ as
\begin{align}
&\mathbf{w}^{(k+1)} = \argmin_{\mathbf{w}\in\mathbb{R}^d}l(\mathbf{w}^{(k)}) + \langle \nabla l(\mathbf{w}^{(k)}),\mathbf{w}- \mathbf{w}^{(k)}\rangle \nonumber\\
&+\frac{t^{(k)}}{2}\|\mathbf{w}- \mathbf{w}^{(k)}\|^2+r_1(\mathbf{w})-r_2(\mathbf{w}^{(k)})-\langle\mathbf{s}_2,\mathbf{w}-\mathbf{w}^{(k)}\rangle, \nonumber
\end{align}
where $\mathbf{s}_2$ is a sub-gradient of $r_2(\mathbf{w})$ at $\mathbf{w}=\mathbf{w}^{(k)}$. Our algorithm differs from SCP in
that the original regularizer $r(\mathbf{w})=r_1(\mathbf{w})-r_2(\mathbf{w})$ is used in the proximal operator in problem (\ref{eq:minimizesurrogate}), while $r_1(\mathbf{w})$ minus
a locally linear approximation for $r_2(\mathbf{w})$ is adopted in SCP. We will show in the experiments that our proposed GIST algorithm is more efficient than SCP.

\section{Experiments}\label{sec:experiments}
\subsection{Experimental Setup}
We evaluate our GIST algorithm by considering the Capped $\ell_1$ regularized logistic regression problem, that is $l(\mathbf{w})=\frac{1}{n}\sum_{i=1}^n\log\left(1+\exp(-y_i\mathbf{x}_i^T\mathbf{w})\right)$ and $r(\mathbf{w})=\lambda\sum_{i=1}^d\min(|w_i|,\theta)$.
We compare our GIST algorithm with the Multi-Stage (MS) algorithm
and the SCP algorithm in different settings using twelve data sets summarized in \TabRef{tab:textdataset}. These data
sets are high dimensional and sparse. Two of them (news20, real-sim)\footnote{http://www.csie.ntu.edu.tw/cjlin/libsvmtools/datasets/}
have been preprocessed as two-class data sets \cite{lin2008trust}. The other ten\footnote{http://www.shi-zhong.com/software/docdata.zip}
are multi-class data sets. We transform the multi-class data sets into two-class by labeling the first half of all classes as
positive class, and the remaining classes as the negative class.

All algorithms are implemented in Matlab and executed on an Intel(R) Core(TM)2 Quad CPU (Q6600 @2.4GHz) with 8GB memory.
We set $\sigma=10^{-5},m=5,\eta=2,1/t_{\min}=t_{\max}=10^{30}$ and choose the starting points $\mathbf{w}^{(0)}$ of all algorithms
as zero vectors. We terminate all algorithms if the relative change of the two consecutive objective function values
is less than $10^{-5}$ or the number of iterations exceeds $1000$. The Matlab codes of the GIST algorithm are available online~\cite{gong2013gist}.

\subsection{Experimental Evaluation and Analysis}
We report the objective function value vs. CPU time plots with
different parameter settings in \FigRef{fig:objvstime}. From these figures, we have the following observations: (1) Both GISTbb-Monotone and GISTbb-Nonmonotone
decrease the objective function value rapidly and they always have the fastest convergence speed, which shows that adopting the BB rule to initialize $t^{(k)}$
indeed greatly accelerates the convergence speed. Moreover, both GISTbb-Monotone and GISTbb-Nonmonotone algorithms
achieve the smallest objective function values. (2) GISTbb-Nonmonotone may give rise to an increasing objective function value but finally converges and has
a faster overall convergence speed than GISTbb-Monotone in most cases, which indicates that the non-monotone line search
criterion can further accelerate the convergence speed. (3) SCPbb-Nonmonotone is comparable to GISTbb-Nonmonotone in several cases, however, it
converges much slower and achieves much larger objective function values than those of GISTbb-Nonmonotone in the remaining cases. This demonstrates
the superiority of using the original regularizer $r(\mathbf{w})=r_1(\mathbf{w})-r_2(\mathbf{w})$ in the proximal operator in problem (\ref{eq:minimizesurrogate}).
(4) GIST-1 has a faster convergence speed than GIST-$t^{(k-1)}$ in most cases, which
demonstrates that it is a bad strategy to use $t^{(k-1)}$ to initialize $t^{(k)}$. This is because $\{t^{(k)}\}$ increases monotonically in this way, making
the step size $1/t^{(k)}$ monotonically decreasing when the algorithm proceeds.

\begin{table*}[!ht]\vspace{-0.0cm}
\caption{Data sets statistics: $n$ is the number of samples and $d$ is the dimensionality of
the data.} \vspace{-0.0cm} \label{tab:textdataset}
\begin{center}
\scriptsize{
\begin{tabular}{c||cccccccccccc}
\hline\hline
No. &1 &2 &3 &4 &5 &6 &7 &8 &9 &10 &11 &12\\
\hline
datasets &classic &hitech &k1b &la12 &la1 &la2 &news20 &ng3sim &ohscal &real-sim &reviews &sports \\
\hline
$n$ &7094 &2301 &2340 &2301 &3204 &3075 &19996 &2998 &11162 &72309 &4069 &8580   \\
$d$ &41681 &10080 &21839 &31472 &31472 &31472 &1355191 &15810 &11465 &20958 &18482 &14866 \\
%$\sharp$ positive &2431 &1030 &2024 &2413 &1250 &1163 &9999 &1000 &4497 &22238 &2132 &4967\\
%$\sharp$ negative &4663 &1271 &316 &3866 &1954 &1912 &9997 &1998 &6665 &50071 &1937 &3613\\
%$\sharp$ nonzero &223839 &331373 &302992 &939407 &484024 &455383 &9097916 &335879 &674365 &3709083 &758635 &1091723\\
\hline\hline
\end{tabular}}
\end{center}\vspace{-0.0cm}
\end{table*}

\begin{figure*}[!ht]\vspace{-0.3cm}

\begin{minipage}[c]{1.0\linewidth}
\centering
\includegraphics[width=.23\linewidth]{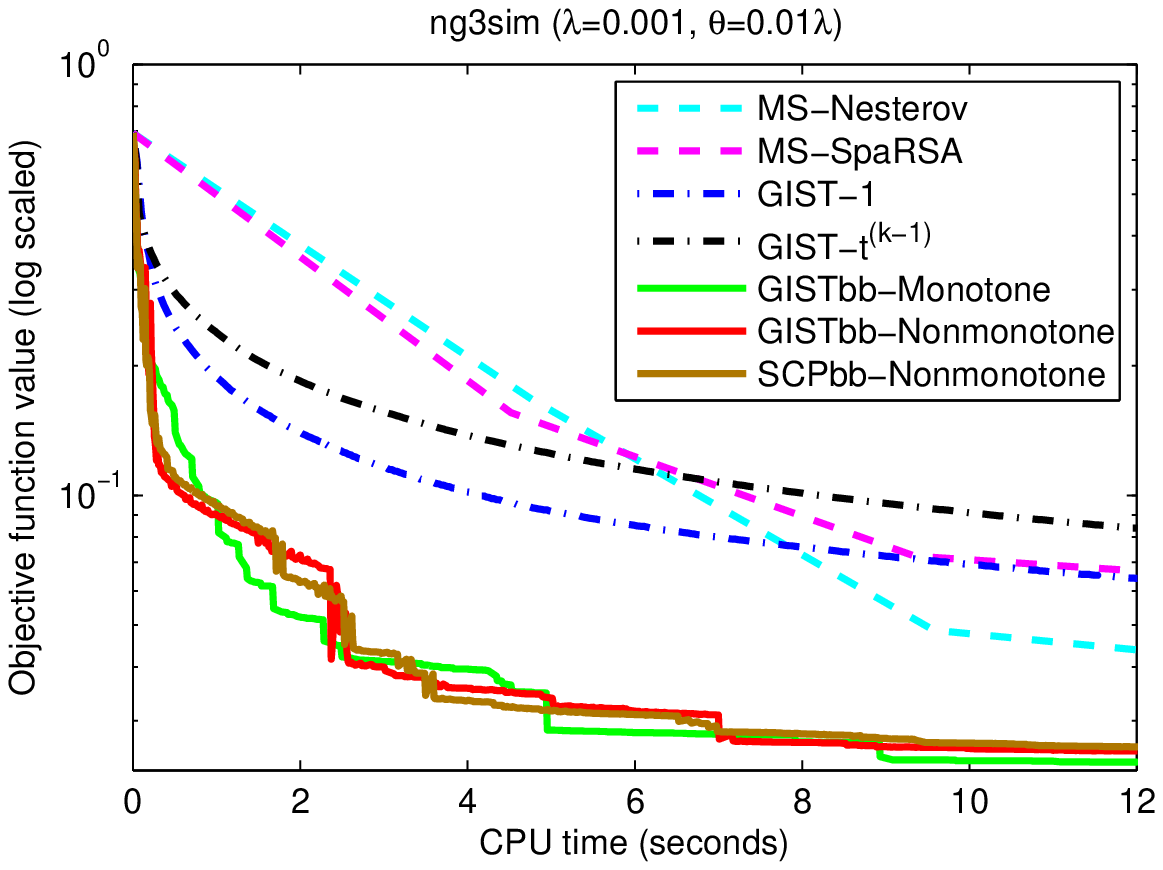}
\includegraphics[width=.23\linewidth]{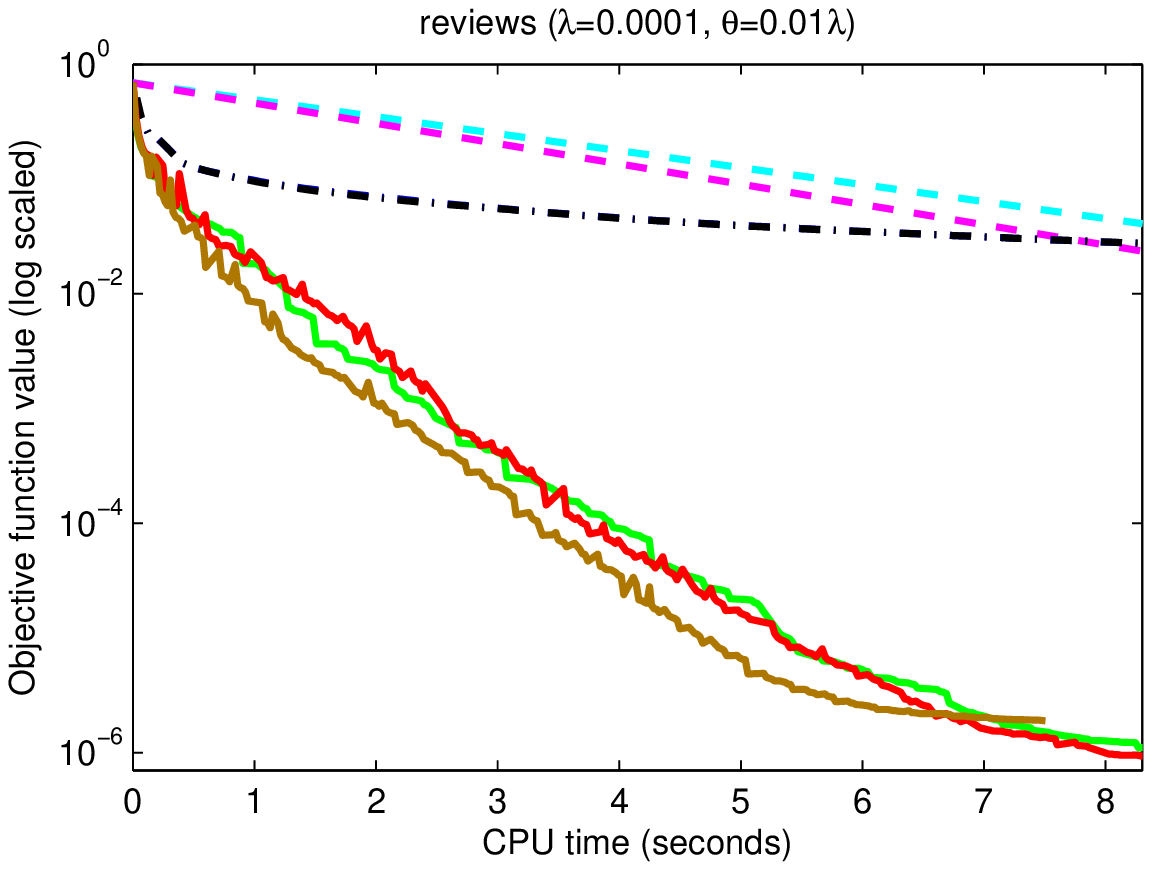}
\includegraphics[width=.23\linewidth]{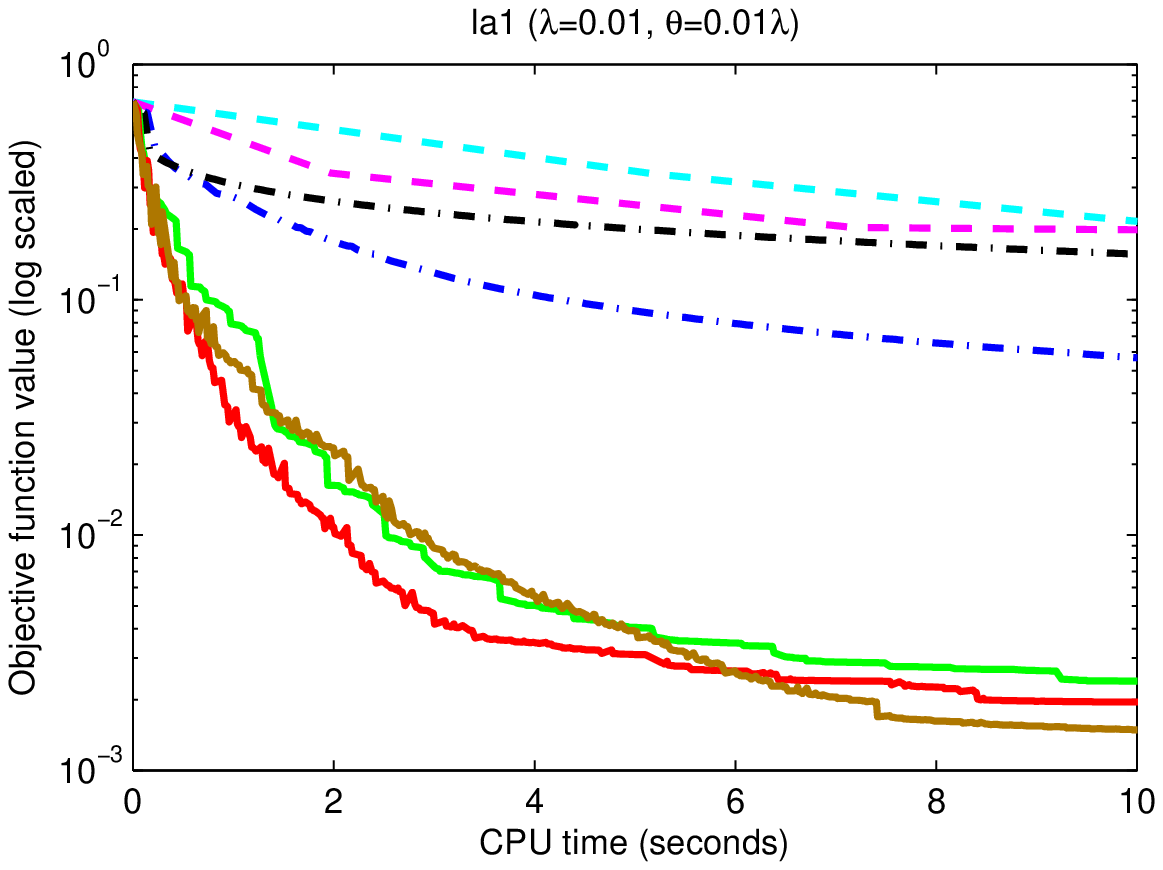}
\includegraphics[width=.23\linewidth]{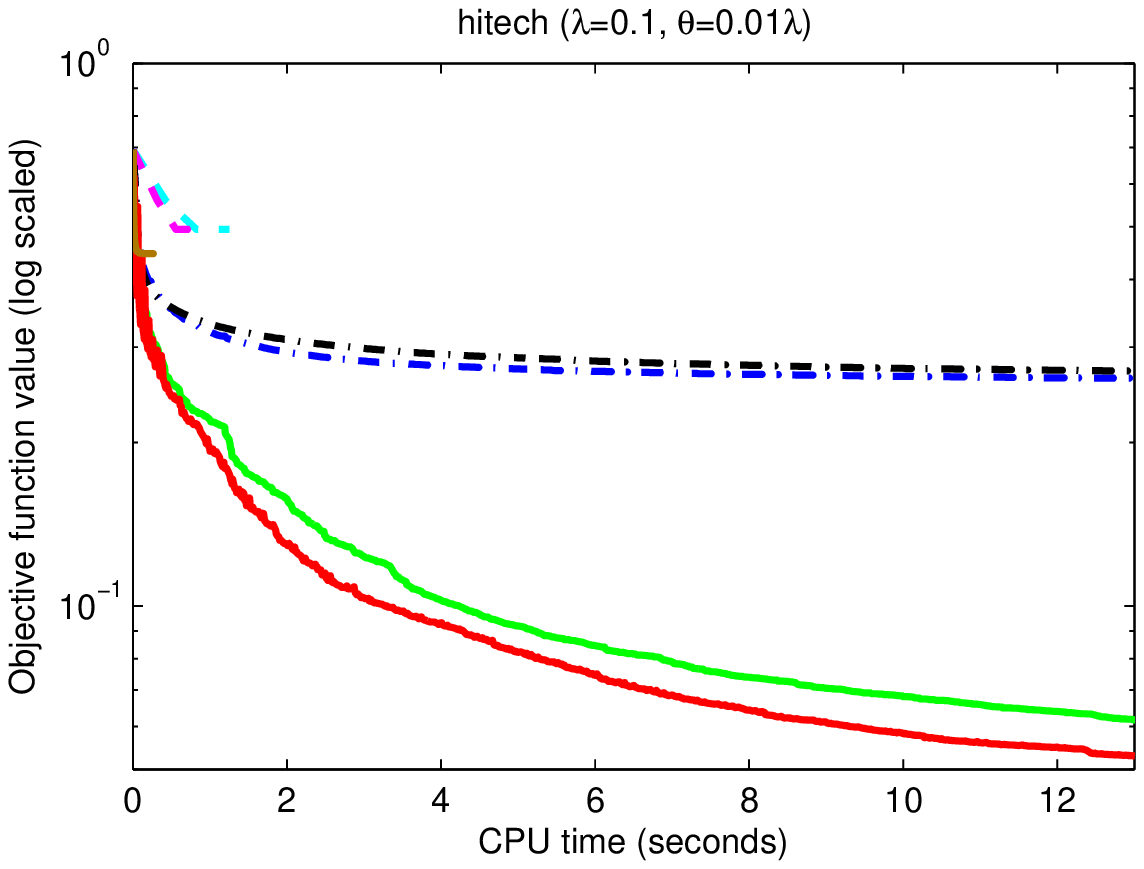}
\end{minipage}
\vskip 0.0cm
\begin{minipage}[c]{1.0\linewidth}
\centering
\includegraphics[width=.23\linewidth]{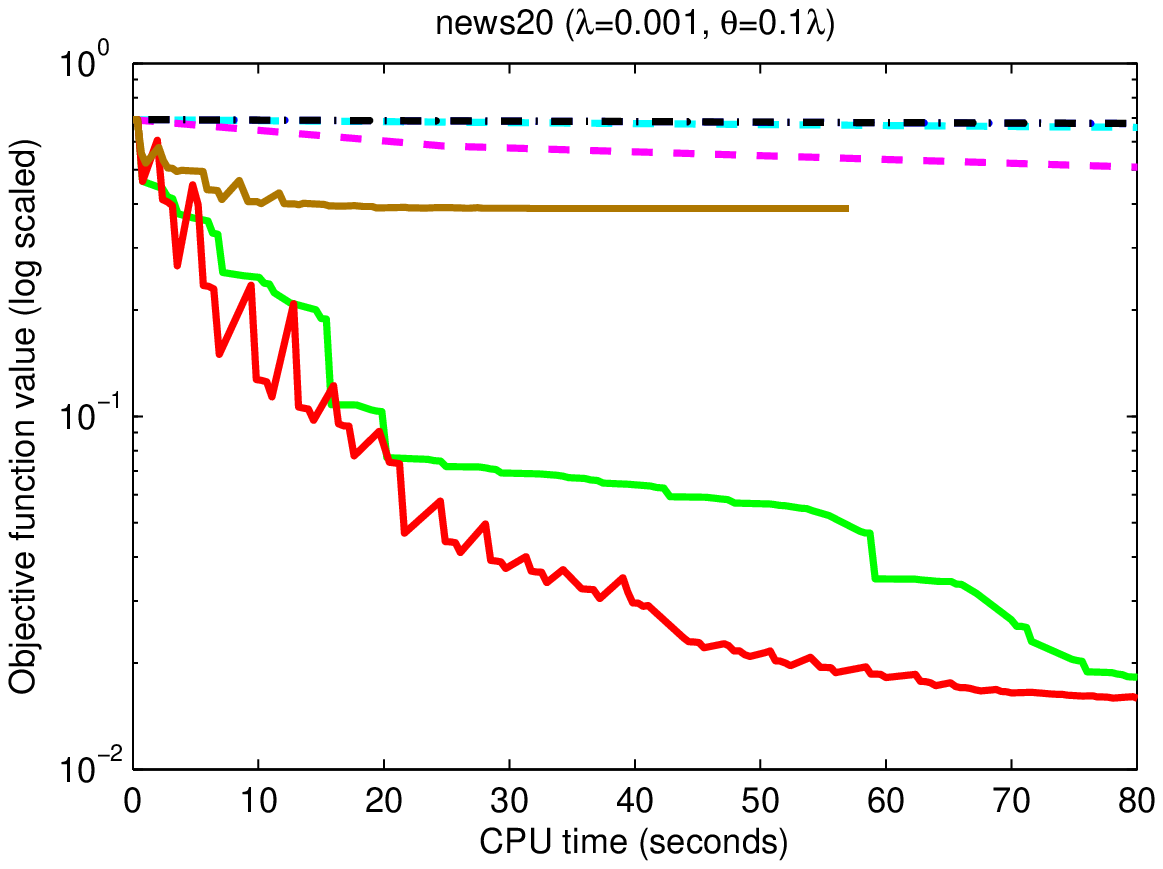}
\includegraphics[width=.23\linewidth]{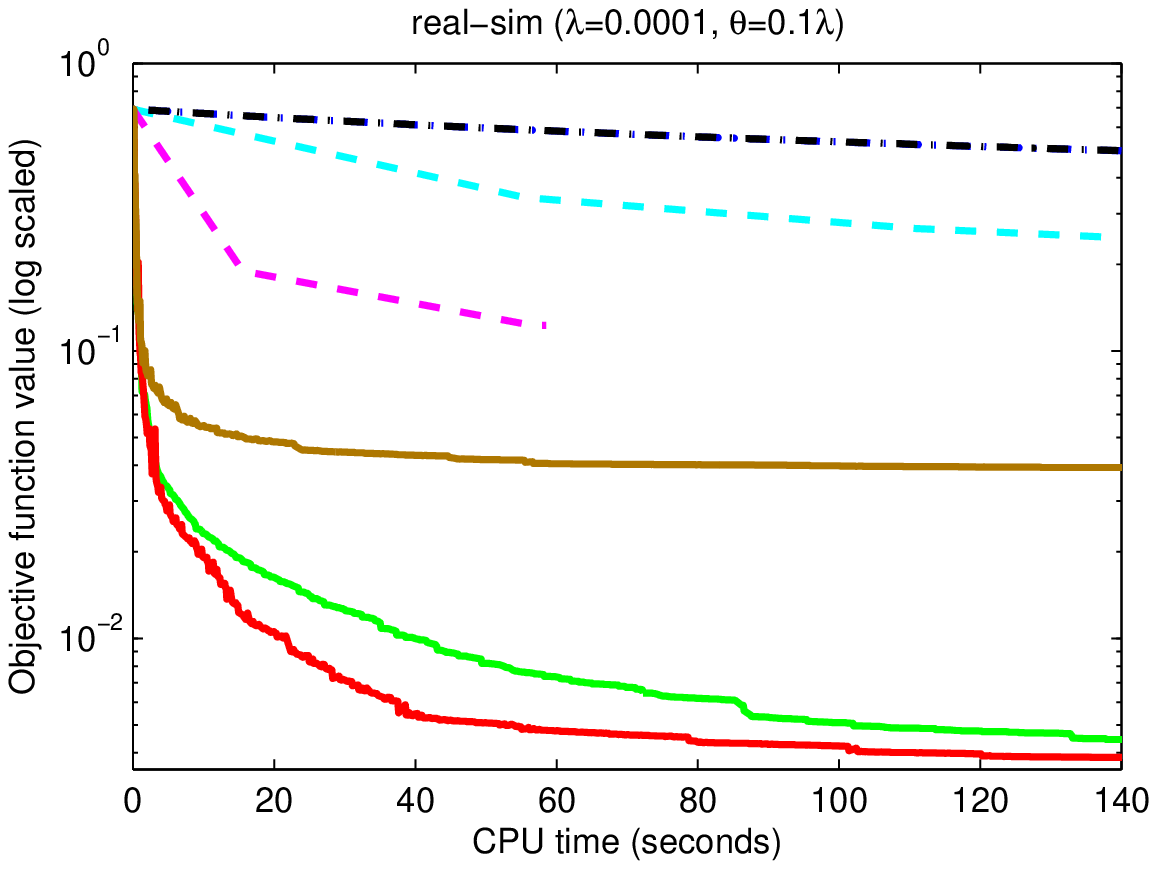}
\includegraphics[width=.23\linewidth]{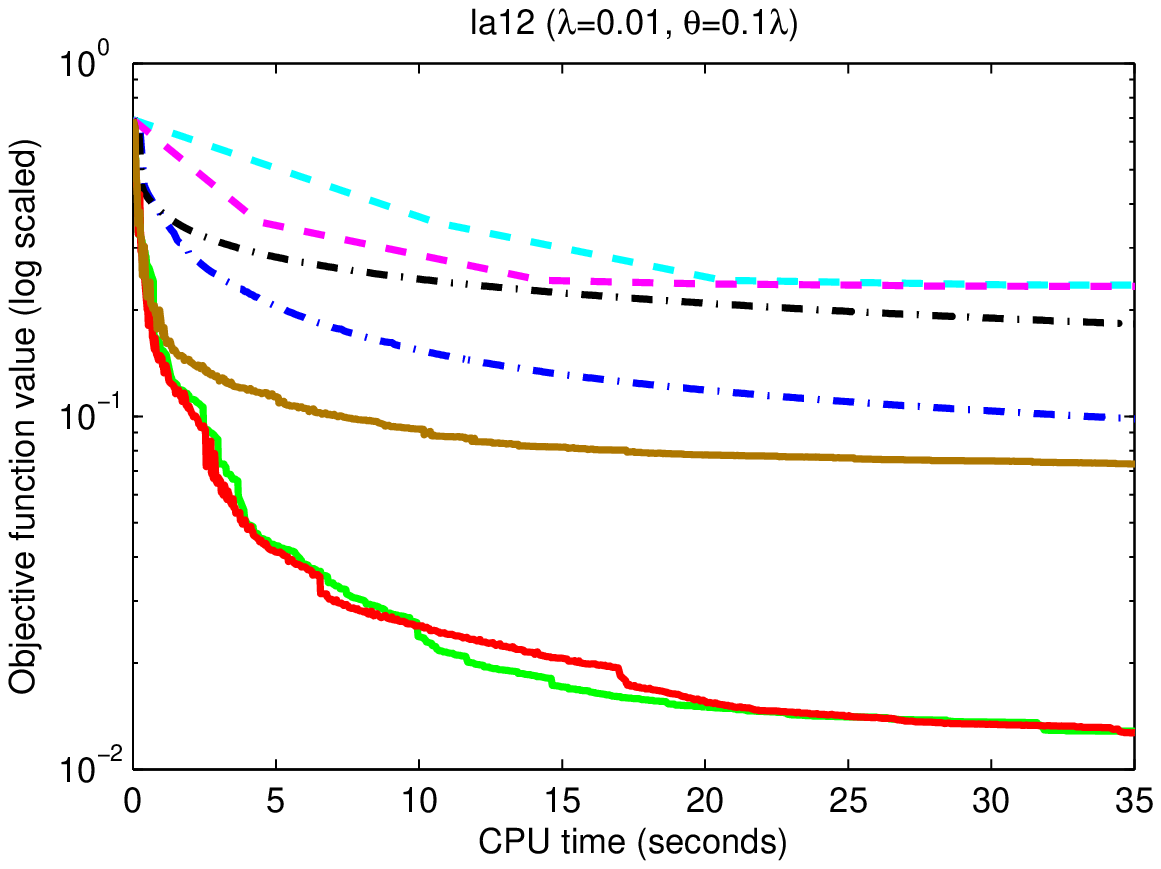}
\includegraphics[width=.23\linewidth]{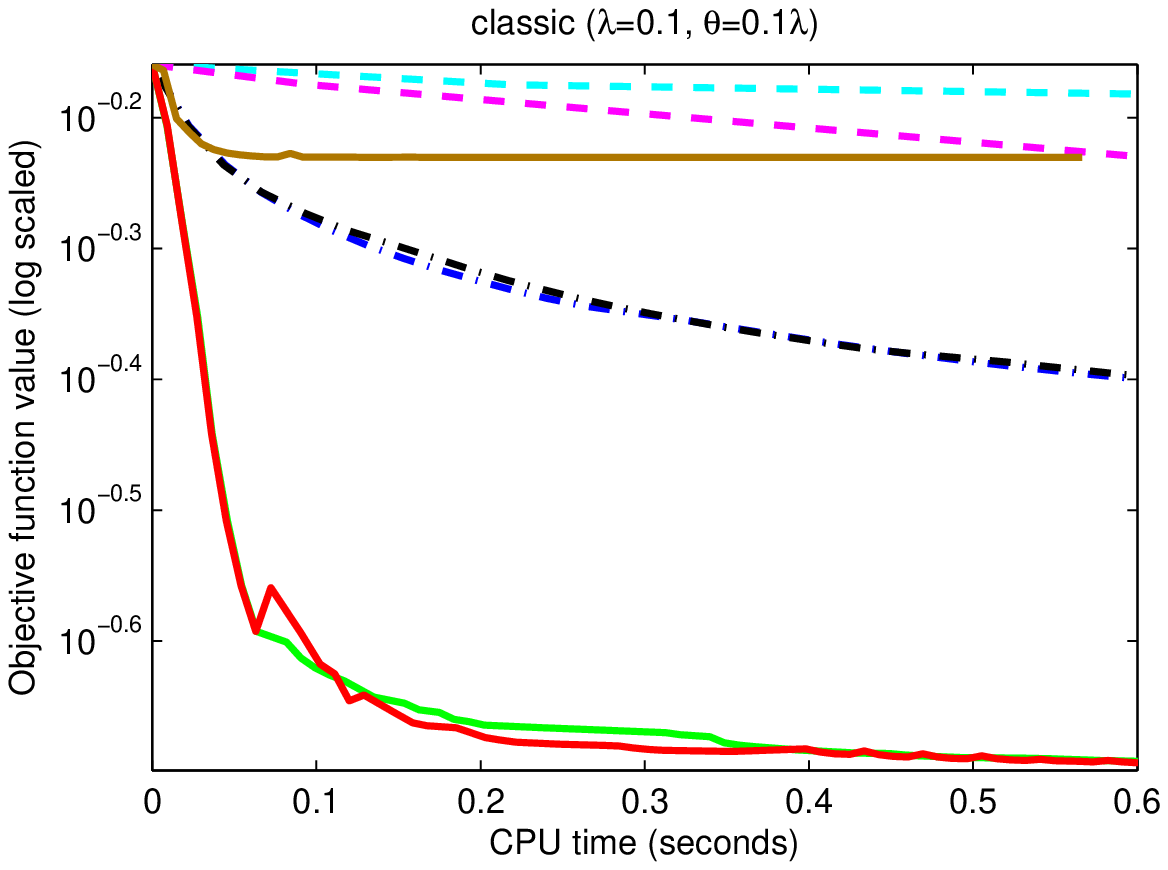}
\end{minipage}
\vskip 0.0cm
\begin{minipage}[c]{1.0\linewidth}
\centering
\includegraphics[width=.23\linewidth]{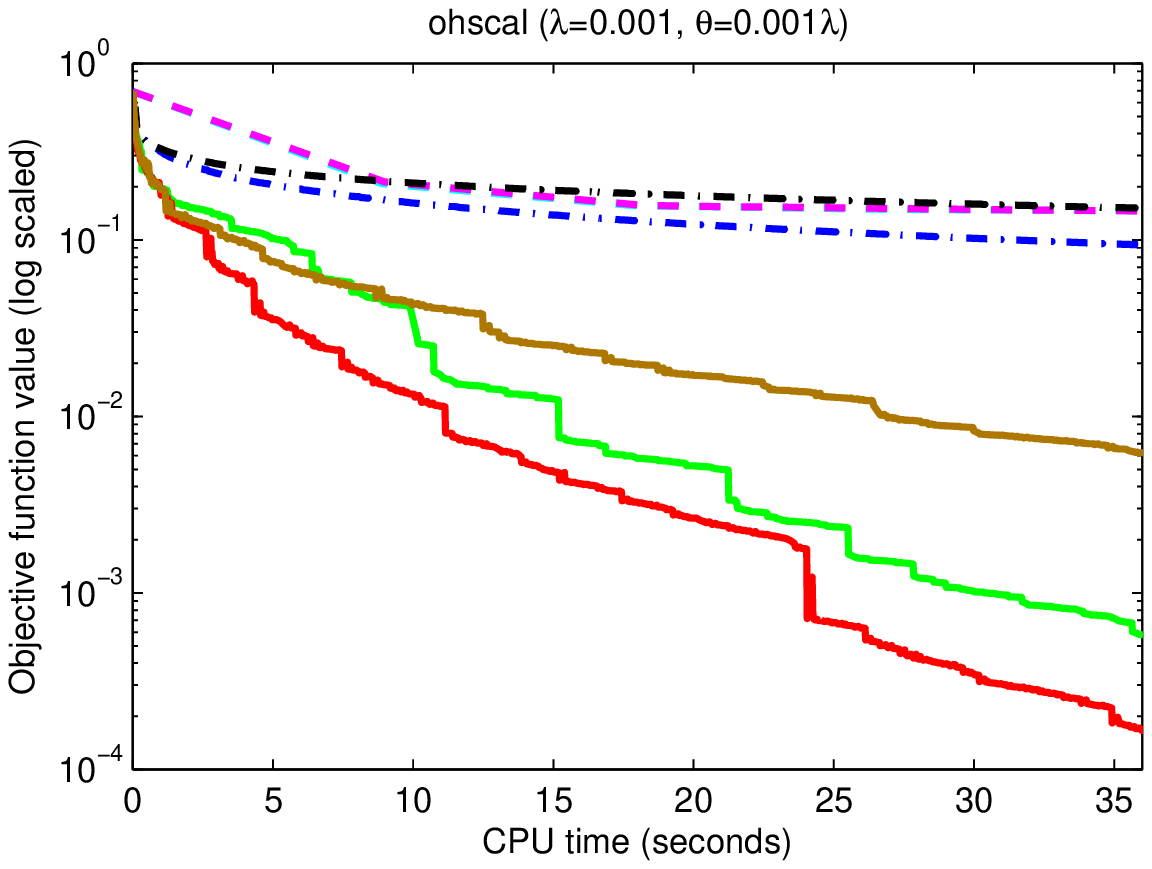}
\includegraphics[width=.23\linewidth]{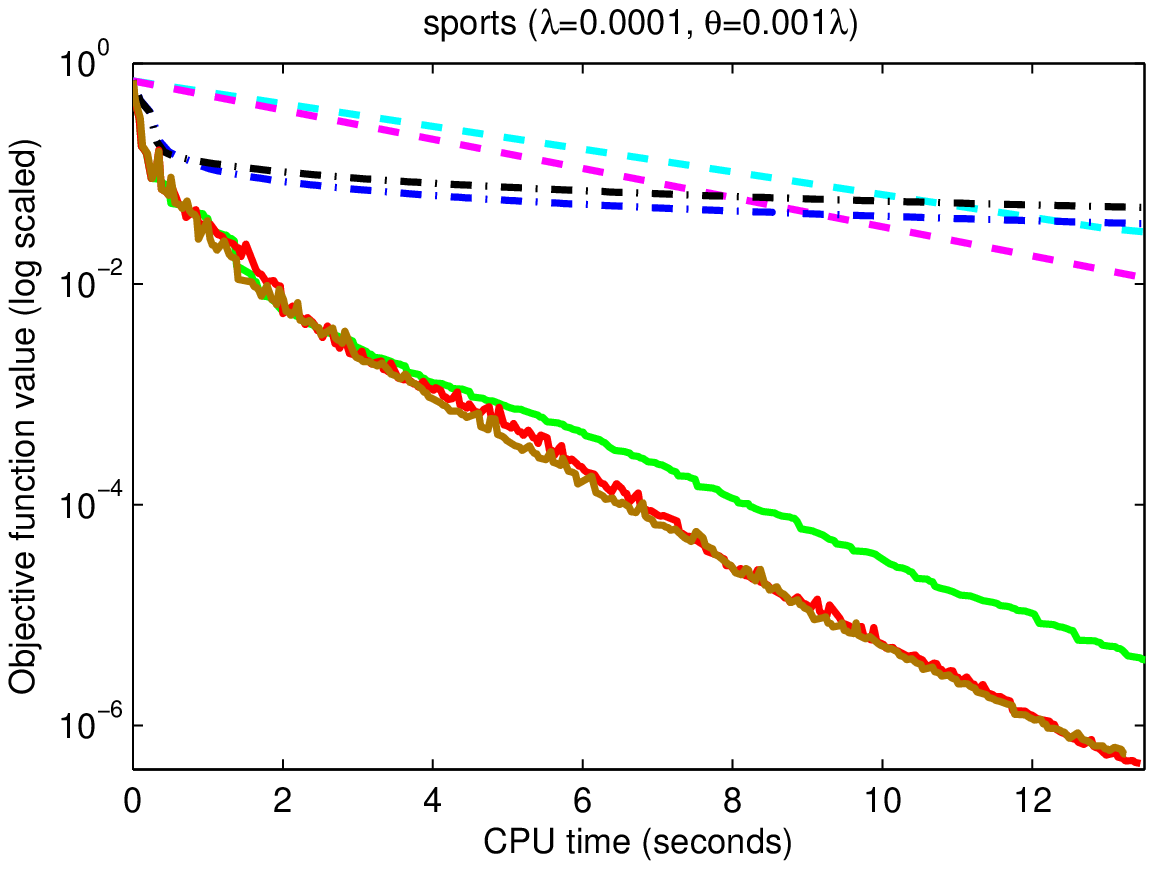}
\includegraphics[width=.23\linewidth]{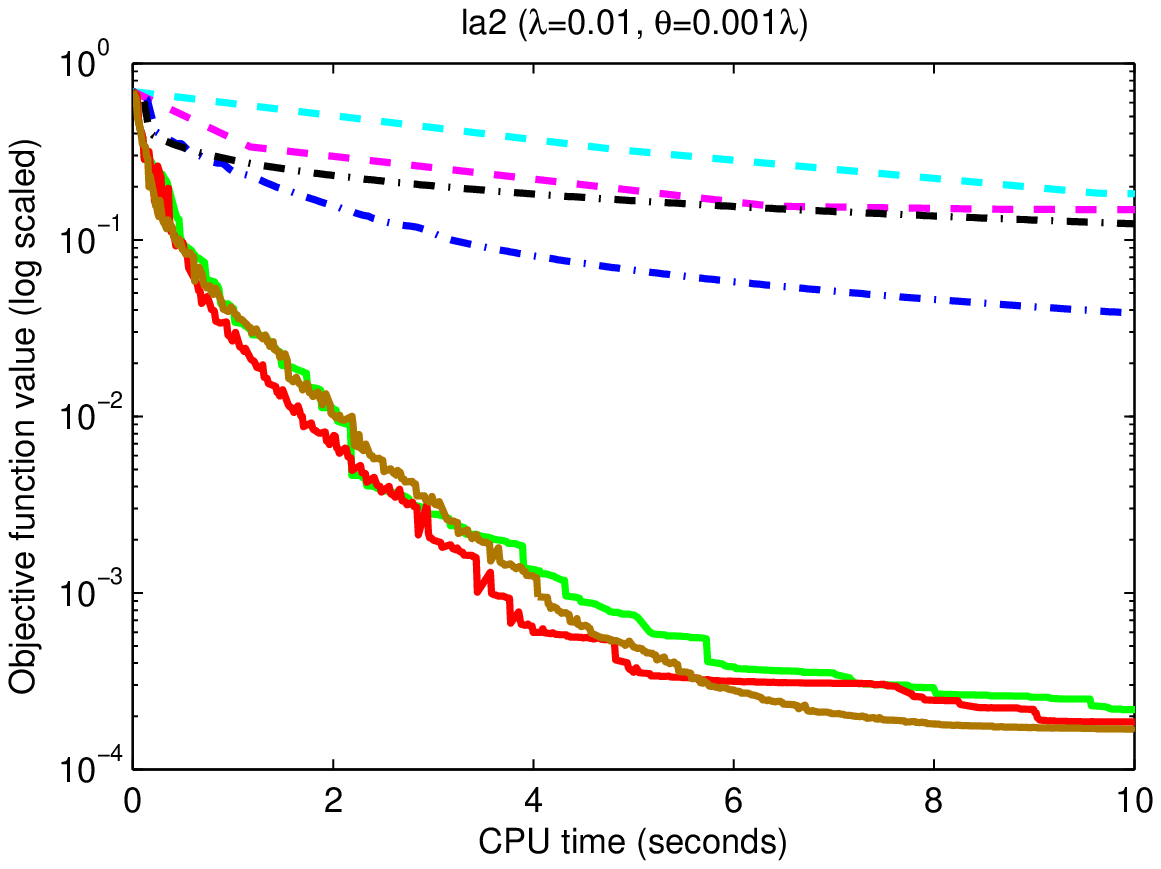}
\includegraphics[width=.23\linewidth]{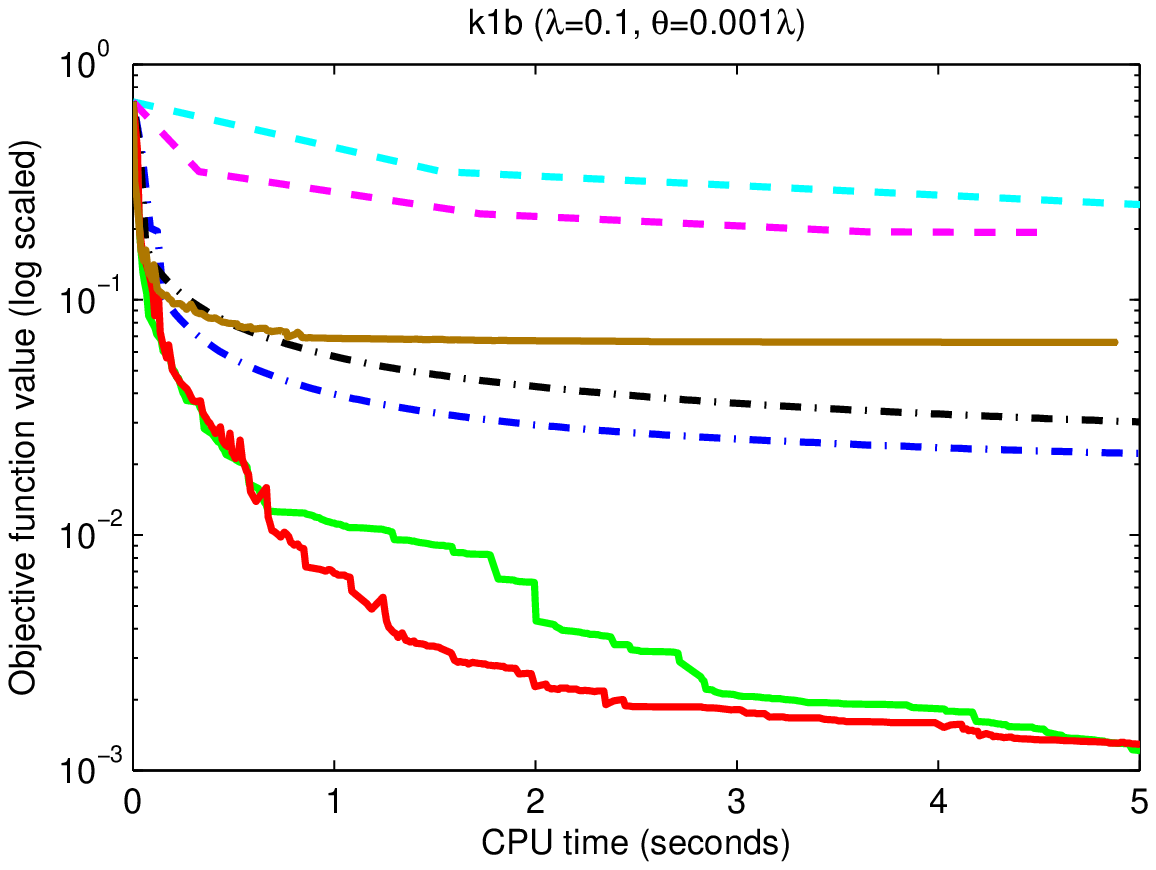}
\end{minipage}
\vspace{-0.3cm}\caption{Objective function value vs. CPU time plots. MS-Nesterov/MS-SpaRSA: The Multi-Stage algorithm using the Nesterov/SpaRSA method to solve problem (\ref{eq:dcp}); GIST-1/GIST-$t^{(k-1)}$/GISTbb-Monotone/GISTbb-Nonmonotone: The GIST algorithm using $1$/$t^{(k-1)}$/BB rule/BB rule to initialize $t^{(k)}$ and
\EqRef{eq:acceptcriterionMonotone}/\EqRef{eq:acceptcriterionMonotone}/\EqRef{eq:acceptcriterionMonotone}/\EqRef{eq:acceptcriterionNonmonotone} as the line search criterion;
SCPbb-Nonmonotone: The SCP algorithm using the BB rule to initialize $t^{(k)}$ and \EqRef{eq:acceptcriterionNonmonotone} as the line search criterion.
Note that on data sets `hitech' and `real-sim', MS algorithms stop early (the SCP algorithm has similar behaviors on data sets `hitech' and `news20'),
because they satisfy the termination condition that the relative change of the two consecutive objective function values
is less than $10^{-5}$. However, their objective function values are much larger than those of GISTbb-Monotone and GISTbb-Nonmonotone.}
\label{fig:objvstime}\vspace{-0.4cm}
\end{figure*}

\section{Conclusions}\label{sec:conclusions}
We propose an efficient iterative shrinkage and thresholding algorithm to solve a general class of non-convex optimization problems
encountered in sparse learning. A critical step of the proposed
algorithm is the computation of a proximal operator, which has a closed-form solution for many commonly used formulations. We propose
to initialize the step size at each iteration using the BB rule and employ both monotone and non-monotone criteria as line search conditions, which
greatly accelerate the convergence speed. Moreover, we provide a detailed convergence analysis of the proposed algorithm, showing that
the algorithm converges under both monotone and non-monotone line search criteria. Experiments results on large-scale data sets
demonstrate the fast convergence of the proposed algorithm.

In our future work, we will focus on analyzing the theoretical performance
(e.g., prediction error bound, parameter estimation error bound etc.)
of the solution obtained by the GIST algorithm. In addition, we plan to apply
the proposed algorithm to solve the multi-task feature
learning problem~\cite{gong2012multi,gong2012robust}.

\section*{Acknowledgements}
This work is supported partly by 973 Program (2013CB329503), NSFC (Grant No. 91120301, 61075004, 61021063),
NIH (R01 LM010730) and NSF (IIS-0953662, CCF-1025177, DMS1208952).

\appendix
%\large{\textbf{Appendix}}
\normalsize

%\section*{A. Proof of \LemmaRef{lemma:criterionsatisfy}}

%\section*{B. Proof of \LemmaRef{lemma:tbounded}}

\section*{Appendix: Solutions to Problem (\ref{eq:minimizesurrogate})}
Observe that $r(\mathbf{w})=\sum_{i=1}^dr_i(w_i)$ and problem (\ref{eq:minimizesurrogate}) can
be equivalently decomposed into $d$ independent univariate optimization problems:
\begin{align}
w^{(k+1)}_i=\argmin_{w_i}h_i(w_i)=\frac{1}{2}\left(w_i-u^{(k)}_i\right)^2+\frac{1}{t^{(k)}}r_i(w_i),\nonumber
\end{align}
where $i=1,\cdots,d$ and $u^{(k)}_i$ is the $i$-th entry of $\mathbf{u}^{(k)} = \mathbf{w}^{(k)} - \nabla l(\mathbf{w}^{(k)})/t^{(k)}$. To
simplify the notations, we unclutter the above equation by removing the subscripts and supscripts as follows:
\begin{align}
w^{(k+1)}=\argmin_{w}h_i(w)=\frac{1}{2}\left(w-u\right)^2+\frac{1}{t}r_i(w).\label{eq:proximalunivariate}
\end{align}
\begin{itemize}
\item $\bm{\ell_1}$-\textbf{norm}: $w^{(k+1)}=\sign(u)\max\left(0,|u|-\lambda/t\right)$.
\item \textbf{LSP}: We can obtain an optimal solution of problem (\ref{eq:proximalunivariate}) via:
$w^{(k+1)}=\sign(u)x$,
where $x$ is an optimal solution of the following problem:
\begin{align}
&x=\argmin_{w}\frac{1}{2}\left(w-|u|\right)^2+\frac{\lambda}{t}\log(1+w/\theta)\nonumber\\
&~~~~~~~~~~s.t.~w\geq 0.\nonumber
\end{align}
Noting that the objective function above is differentiable in the interval $[0,+\infty)$ and
the minimum of the above problem is either a stationary point (the first derivative is zero) or an endpoint of the feasible region, we have
\begin{align}
x=\argmin_{w\in\mathcal{C}}\frac{1}{2}\left(w-|u|\right)^2+\frac{\lambda}{t}\log(1+w/\theta),\nonumber
\end{align}
where $\mathcal{C}$ is a set composed of $3$ elements or $1$ element.
If $t^2(|u|-\theta)^2-4t(\lambda-t|u|\theta)\geq 0$,
\begin{align}
&\mathcal{C}=\left\{0,\right.\nonumber\\
&\left[\frac{t(|u|-\theta)+\sqrt{t^2(|u|-\theta)^2-4t(\lambda-t|u|\theta)}}{2t}\right]_+\nonumber\\
&\left.\left[\frac{t(|u|-\theta)-\sqrt{t^2(|u|-\theta)^2-4t(\lambda-t|u|\theta)}}{2t}\right]_+\right\}.\nonumber
\end{align}
Otherwise, $\mathcal{C}=\left\{0\right\}$.
\item \textbf{SCAD}: We can recast problem (\ref{eq:proximalunivariate}) into the following three problems:
\begin{align}
&x_1=\argmin_{w}\frac{1}{2}\left(w-u\right)^2+\frac{\lambda}{t}|w|\quad s.t.~|w|\leq\lambda,\nonumber\\
&x_2=\argmin_{w}\frac{1}{2}\left(w-u\right)^2\nonumber\\
&+\frac{-w^2+2\theta(\lambda/t)|w|-(\lambda/t)^2}{2(\theta-1)}~s.t.~\lambda\leq |w|\leq\theta\lambda,\nonumber\\
&x_3=\argmin_{w}\frac{1}{2}\left(w-u\right)^2+\frac{(\theta+1)\lambda^2}{2t^2}s.t.|w|\geq\theta\lambda.\nonumber
%&~~~~~~~~~~~~s.t.~|w|\geq\theta\lambda.\nonumber
\end{align}
We can easily obtain that ($x_2$ is obtained using the similar idea as \textbf{LSP} by considering that $\theta>2$):
\begin{align}
&x_1=\sign(u)\min(\lambda,\max(0,|u|-\lambda/t)),\nonumber\\
&x_2=\sign(u)\min(\theta\lambda,\max(\lambda,\frac{t|u|(\theta-1)-\theta\lambda}{t(\theta-2)})),\nonumber\\
&x_3=\sign(u)\max(\theta\lambda,|u|).\nonumber
\end{align}
Thus, we have
\begin{align}
w^{(k+1)}=\argmin_{y}h_i(y) \quad s.t.~y\in\{x_1,x_2,x_3\}.\nonumber
\end{align}
\item \textbf{MCP}: Similar to SCAD, we can recast problem (\ref{eq:proximalunivariate}) into the following two problems:
\begin{align}
&x_1=\argmin_{w}\frac{1}{2}\left(w-u\right)^2+\frac{\lambda}{t}|w|-\frac{w^2}{2\theta}~s.t.~|w|\leq\theta\lambda,\nonumber\\
&x_2=\argmin_{w}\frac{1}{2}\left(w-u\right)^2+\frac{\theta(\lambda/t)^2}{2} ~s.t.~|w|\geq\theta\lambda.\nonumber
\end{align}
We can easily obtain that
\begin{align}
x_1=\sign(u)z,~x_2=\sign(u)\max(\theta\lambda,|u|),\nonumber
\end{align}
where $z=\argmin_{w\in\mathcal{C}}\frac{1}{2}\left(w-|u|\right)^2+\frac{\lambda}{t}w-\frac{w^2}{2\theta}$;
%\begin{align}
%z=\argmin_{w\in\mathcal{C}}\frac{1}{2}\left(w-|u|\right)^2+\frac{\lambda}{t}w-\frac{w^2}{2\theta};\nonumber
%\end{align}
$\mathcal{C}=\left\{0,\theta\lambda,\min\left(\theta\lambda,\max\left(0,\frac{\theta(t|u|-\lambda)}{t(\theta-1)}\right)\right)\right\}$,
if $\theta-1\neq0$, and $\mathcal{C}=\left\{0,\theta\lambda\right\}$ otherwise.
Thus, we have
\begin{align}
w^{(k+1)}=\left\{
                 \begin{array}{ll}
                 x_1, & \mathrm{if}~h_i(x_1)\leq h_i(x_2) \\
                 x_2, & \mathrm{otherwise}.
                 \end{array}
          \right.\nonumber
\end{align}
\item \textbf{Capped} $\bm{\ell_1}$:
We can recast problem (\ref{eq:proximalunivariate}) into the following two problems:
\begin{align}
&x_1=\argmin_{w}\frac{1}{2}\left(w-u\right)^2+\frac{\lambda}{t}\theta \quad s.t.~|w|\geq\theta,\nonumber\\
&x_2=\argmin_{w}\frac{1}{2}\left(w-u\right)^2+\frac{\lambda}{t}|w| \quad s.t.~|w|\leq\theta.\nonumber
\end{align}
We can easily obtain that
\begin{align}
&x_1=\sign(u)\max(\theta,|u|),\nonumber\\
&x_2=\sign(u)\min(\theta,\max(0,|u|-\lambda/t)).\nonumber
\end{align}
Thus, we have
\begin{align}
w^{(k+1)}=\left\{
                 \begin{array}{ll}
                 x_1, & \mathrm{if}~h_i(x_1)\leq h_i(x_2), \\
                 x_2, & \mathrm{otherwise}.
                 \end{array}
          \right.\nonumber
\end{align}

\end{itemize}

\small
\bibliography{icml2013}
\bibliographystyle{icml2013}

\end{document}